\newtheorem{assumption}{Assumption}
\newcommand{\x}{{\bf x}}
\newcommand{\w}{{\bf w}}
\newcommand{\y}{{\bf y}}
\newcommand*{\Scale}[2][4]{\scalebox{#1}{$#2$}}
\newtheorem{proof}{Proof}
\newcommand{\myblue}[1]{\textcolor[RGB]{130,176,210}{#1}}
\newcommand{\myora}[1]{\textcolor[RGB]{255,190,122}{#1}}
\newcommand{\myred}[1]{\textcolor[RGB]{250,127,111}{#1}}
\newtheorem{thm}{Theorem}
\begin{document}
%
\title{Learning to Rebalance Multi-Modal Optimization by Adaptively Masking Subnetworks}
%
%
%
%

\author{Yang Yang,
        Hongpeng Pan,
        Qing-Yuan Jiang,
        Yi Xu,
        and Jinghui Tang
\IEEEcompsocitemizethanks{\IEEEcompsocthanksitem Yang~Yang, Hongpeng Pan and Jinghui Tang are with the School of Computer Science and Engineering, Nanjing University of Science and Technology, Nanjing 210094, China. \protect\\
E-mail: yyang@njust.edu.cn
\IEEEcompsocthanksitem Yi~Xu is with the School of Control Science and Engineering, Dalian University of Technology, Dalian 116081, China. \protect\\
E-mail: yxu@dlut.edu.cn}
}

\IEEEtitleabstractindextext{%
\begin{abstract}
Multi-modal learning aims to enhance performance by unifying models from various modalities but often faces the ``modality imbalance'' problem in real data, leading to a bias towards dominant modalities and neglecting others, thereby limiting its overall effectiveness. To address this challenge, the core idea is to balance the optimization of each modality to achieve a joint optimum. Existing approaches often employ a modal-level control mechanism for adjusting the update of each modal parameter. However, such a global-wise updating mechanism ignores the different importance of each parameter. Inspired by subnetwork optimization, we explore a uniform sampling-based optimization strategy and find it more effective than global-wise updating. According to the findings, we further propose a novel importance sampling-based, element-wise joint optimization method, called \underline{A}daptively \underline{M}ask \underline{S}ubnetworks Considering Modal \underline{S}ignificance~(AMSS). Specifically, we incorporate mutual information rates to determine the modal significance and employ non-uniform adaptive sampling to select foreground subnetworks from each modality for parameter updates, thereby rebalancing multi-modal learning. Additionally, we demonstrate the reliability of the AMSS strategy through convergence analysis. Building upon theoretical insights, we further enhance the multi-modal mask subnetwork strategy using unbiased estimation, referred to as AMSS+. Extensive experiments reveal the superiority of our approach over comparison methods.
\end{abstract}

\begin{IEEEkeywords}
Multi-Modal Learning, Modality Imbalance, Subnetwork Optimization
\end{IEEEkeywords}}

\maketitle

\IEEEdisplaynontitleabstractindextext

%
\IEEEpeerreviewmaketitle

\IEEEraisesectionheading{\section{Introduction}\label{sec:introduction}}
\IEEEPARstart{I}{n} the real world, object can always be characterized by multiple modalities. For example, in action recognition, one can integrate data from video, audio, and motion sensors to identify various human actions~\cite{sun2022human}. Similarly, in article classification, predictions can be made by comprehensively fusing both content and images~\cite{PereiraV14}. 
Compared with single-modal data, multi-modal data is more informative and covers a wider range of information dimensions and diversity. Hence, it is more important to use multiple modal data to perceive the world. By leveraging multi-modal data, multi-modal learning strives to surpass single-modal learning, capturing widespread attention across diverse domains~\cite{khan2021exploiting, lu2024fact,YanXSZT23}.  With the development of deep learning techniques~\cite{YangZZX019, kenton2019bert,YangSZFZXY23}, many multi-modal deep fusion networks have been proposed~\cite{YangWZL018,yang2019semi, yang2019comprehensive, YangYBZZGXY23, YangYBZZGXY23}. They often employ different joint training strategies such as feature interaction~\cite{ZhuLYZLQ23, joze2020mmtm, LiangQGCL22} (i.e., two modalities interact from the input level) and prediction ensemble~\cite{MaHZFZH21,NagraniYAJSS21} (i.e., two modalities interact from the prediction level), all while optimizing a unified learning objective.

\begin{figure}[!htbp]
  \centering
\includegraphics[width=0.9\columnwidth]{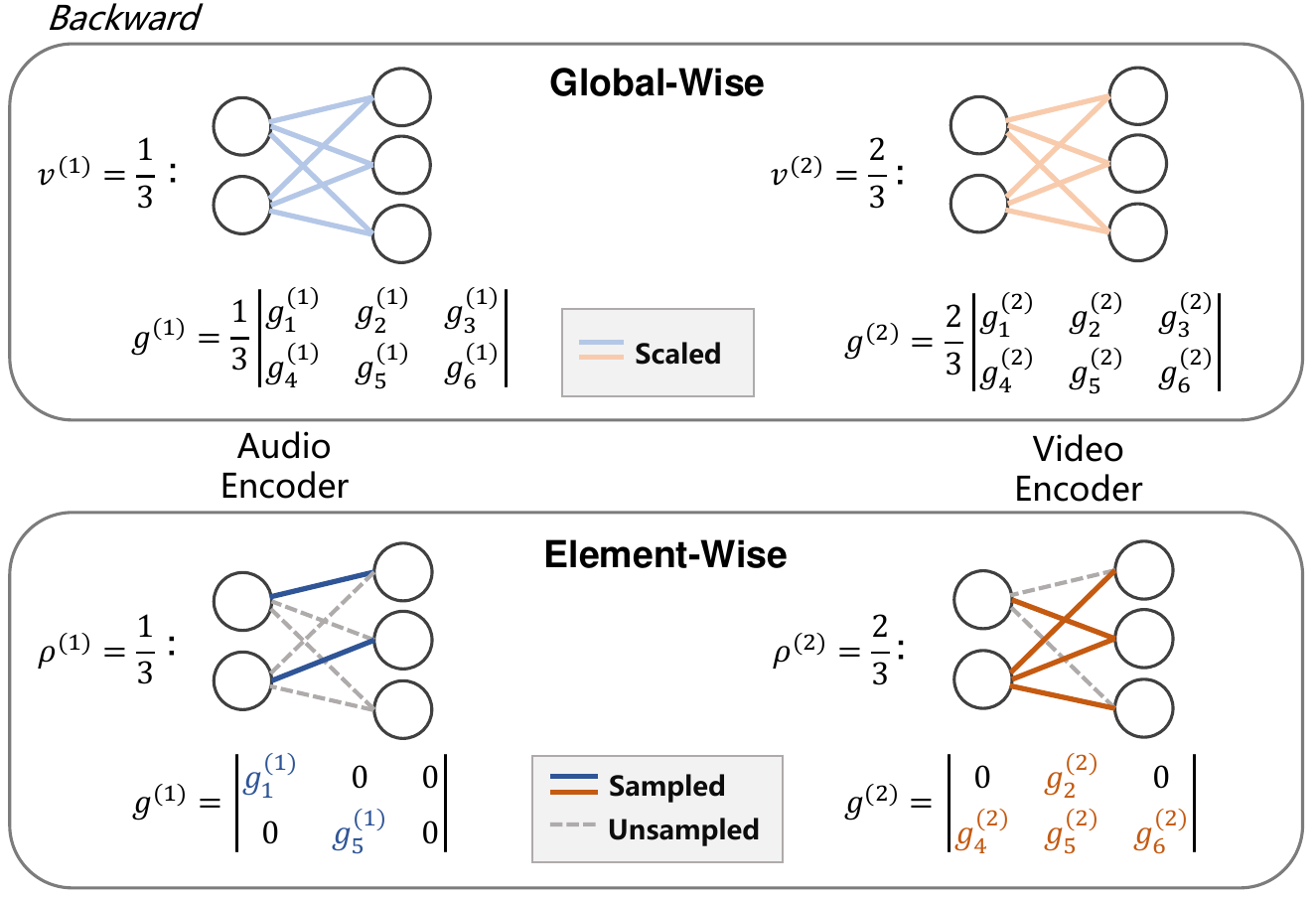} 
\caption{The illustration of different gradient modulation. \textbf{Global-wise}: During backward propagation, uniform modulation is applied to gradients for all parameters. \textbf{Element-wise}: Forward propagation across the entire network, while in backward propagation, parameter gradients undergo differential modulation through a mask subnetwork.}
\label{fig:introduction}
\end{figure}

\begin{table*}
\caption{We evaluate the performance of a multi-modal joint training model under different gradient modulation strategies and intensities for each modality. 
The accuracy of the state-of-the-art method is 67.10. Results superior to the state-of-the-art methods are indicated by \underline{underline}.}
\label{tab:intro}
\begin{subtable}{0.5\textwidth}
\centering
\renewcommand{\arraystretch}{1.5}
\centering

\begin{tabular}{cc|ccccc}
\Xcline{1-7}{0.7pt}
\multicolumn{2}{c|}{\multirow{1}{*}{Global-wise}} & \multicolumn{5}{c}{{ $v^{(2)}$}} \\
\multicolumn{2}{c|}{\multirow{1}{*}{(Accuracy)}}& {1.0} & {0.8} & {0.6} & {0.4} & {0.2} \\
\hline
& 1.0 & \myblue{64.55} & \myora{64.01} & \myora{63.16} & \myora{61.53} & \myora{59.91} \\
&0.8 & \myred{64.74} & \myblue{64.28} & \myora{63.00} & \myora{62.27} & \myora{60.53} \\
 \multirow{1}{*}{$v^{(1)}$}&0.6 & \myred{65.01}& \myred{64.74} & \myblue{63.97} & \myora{62.62} & \myora{60.38} \\
&0.4 & \myred{65.24} & \myred{64.90} & \myred{64.51} & \myblue{63.39} & \myora{61.77} \\
&0.2 & \myred{66.13} & \myred{65.28} & \myred{65.13} & \myred{63.89} & \myblue{63.20} \\
\Xcline{1-7}{0.7pt}
\end{tabular}
\end{subtable}%
\begin{subtable}{0.5\textwidth}
\centering
\renewcommand{\arraystretch}{1.5}
\centering

\begin{tabular}{cc|ccccc}
\Xcline{1-7}{0.7pt}
\multicolumn{2}{c|}{\multirow{1}{*}{Element-wise}} & \multicolumn{5}{c}{{$\rho^{(2)}$}} \\

\multicolumn{2}{c|}{\multirow{1}{*}{(Accuracy)}}& {1.0} & {0.8} & {0.6} & {0.4} & {0.2} \\
\hline
& 1.0 & \myblue{64.55} & \myora{63.74} & \myora{63.32} & \myora{62.93} & \myora{59.72} \\
&0.8 & \myred{64.48} & \myblue{64.44} & \myora{64.40} & \myora{62.62} & \myora{60.80} \\
 \multirow{1}{*}{$\rho^{(1)}$}&0.6 & \myred{65.83} & \myred{65.13} & \myblue{65.02} & \myora{62.62} & \myora{61.65} \\
&0.4 & \underline{\myred{67.14}} & \myred{66.33} & \myred{66.60} & \myblue{63.86} & \myora{62.85} \\
&0.2 & \underline{\myred{68.96}} & \underline{\myred{68.53}} & \underline{\myred{68.84}} & \myred{66.25} & \myblue{63.97} \\
\Xcline{1-7}{0.7pt}
\end{tabular}
\end{subtable}\vspace{-10pt}
\end{table*}

Recent studies~\cite{peng2022balanced, fan2023pmr} reveal that multi-modal approaches maybe perform far from their upper bound even though outperform the single-modal approaches, or even inferior to the single-modal model in certain situations~\cite{wang2020makes}. 
This phenomenon is caused by the notorious ``modality imbalance'' problem~\cite{peng2022balanced, HuangLZYH22} during training, which involves the presence of a dominant modality and a non-dominant modality. 
Therefore, in the multi-modal joint training, due to the inherent greediness~\cite{wu2022characterizing}, the model updates excessively lean towards the dominant modalities, neglecting the learning of the non-dominant modality. Consequently, the non-dominant modality experiences severely slow learning, resulting in the performance of multi-modal learning inferior to that achieved in single-modal learning. 
Similar phenomena have been observed across various multi-modal tasks~\cite{wang2020makes, VielzeufLPJ18, abs-2006-15955}. Therefore, the inefficiency in leveraging and fusing information from diverse modalities poses a significant challenge to the field of multi-modal learning.

{To track this problem, initial work~\cite{wang2020makes} found that different modalities may suffer from overfitting and converge at different rates, which leads to inconsistent learning efficiency when directly optimizing a uniform objective across different modalities. Furthermore, \cite{wu2022characterizing} introduced that the model tends to learn the dominant modality while neglecting the learning of non-dominant modalities, which affects the full utilization of multi-modal information. To cope with this issue, currently, the majority of studies~\cite{wu2022characterizing, peng2022balanced, yao2022modality, fan2023pmr, li2023boosting} have been proposed to modulate each modal gradient during the back-propagation process, either assigning different learning rates to different modal branches or introducing additional losses for each modality. These strategies aim to maximize the contribution of each modality in multi-modal learning. Overall, these methods consistently utilize coarse-grained control at the modal level (global-wise) by updating the complete parameters of each modality. An example of global-wise updating mechanism is presented in the top panel of Figure~\ref{fig:introduction}, where $v^{(1)}/v^{(2)}$ represents the gradient modulation coefficient of the Audio/Video encoder. Within this mechanism, a uniform scaling factor is applied to scale gradients for parameters within the same modality, disregarding the distinctions in importance among different parameters. Inspired by recent advanced progress in subnetwork optimization~\cite{WanZZLF13,LeeCK20}, we explore a fine-grained subnetworks optimization~(element-wise updating) to update the gradients during backward procedure. 
In contrast to the global-wise updating mechanism, which utilizes the gradient modulation coefficient $v$ to scale all gradients, the element-wise updating mechanism updates partial gradients according to a specified parameter update ratio $\rho$.
As illustrated in the bottom panel of Figure~\ref{fig:introduction}, $\rho^{(1)}/\rho^{(2)}$ represents the gradient update ratio of the Audio/Video encoder, and the gradients to be updated are generated through uniform sampling. We compare the global-wise and element-wise updating mechanisms through a preliminary experiment. Specifically, we adopt a multi-modal classification task and take the Kinetics-Sound dataset~\cite{arandjelovic2017look} which includes audio~(dominant) and video~(non-dominant) modalities as an example. The multi-modal network employs concatenation fusion at the last layer of each uni-modal stream before prediction. Table~\ref{tab:intro} illustrates the comparison between random strength gradient global modulation across different modalities and parameter mask element-wise modulation. The results reveal that in the majority of cases, the element-wise modulation strategy outperforms its counterparts. In certain instances, it even surpasses the performance of current state-of-the-art methods.


Drawing inspiration from importance sampling~\cite{zhao2015stochastic}, we intend to refine the direct uniform parameters sampling strategy by dynamically adapting it based on the training data.
Building upon this concept, we turn to optimizing each modal subnetwork, thereby fine-grained stimulating the non-dominant modality and alleviating the suppression from the dominant modality. To this end, we design the Adaptively Mask Subnetworks strategy considering modal Significance~(AMSS). Specifically, we first introduce a simple yet effective mechanism to capture the batch-level significance of each modality, by calculating the mutual information rate with each modal prediction. Different from existing imbalanced multi-modal learning that directly weights the entire parameter gradients of each modality, we mask different sizes of promising parametric subnetworks, through non-uniform adaptive sampling (NAS) for each modality based on the modal significance. Therefore, the non-dominant modality masks a smaller subnetwork, while the dominant modality masks a larger subnetwork. After selection, we then perform partial gradient updates for each modality after masking the gradient of subnetwork parameters. Different from pruning operations, our mask strategy involves differentially updating various parameters within the model during back-propagation. Throughout the model forward process, we still use all parameters to calculate the loss. Furthermore, we present a theoretical validation of the reliability of the AMSS optimization strategy via convergence analysis. Nonetheless, this validation is contingent upon certain assumptions, leading to biased estimation. To mitigate this issue, we introduce an enhanced optimization strategy termed AMSS+, grounded in unbiased estimation principles, thereby circumventing constraints imposed by specific assumptions.
In summary, the main contributions of this paper are summarized as follows:
\begin{itemize}

    \item Based on the preliminary experiment, we propose a novel element-wise updating method called AMSS to solve the modality imbalance problem. AMSS can fine-grained stimulate the non-dominant modality and alleviate the suppression from the dominant modality. To the best of our knowledge, this is the first work that adopts element-wise updating mechanism in multi-modal learning.
    \item We engage in theoretical analysis to showcase the effectiveness of subnetwork update strategies in imbalanced multi-modal learning. Additionally, drawing from theoretical findings, we introduce a novel mask strategy under unbiased estimation, termed AMSS+.
    \item We conduct extensive experiments across various modal scenarios, clearly demonstrating the effectiveness of fine-grained subnetworks optimization optimization in achieving a balanced learning approach for a multi-modal network.
\end{itemize} }
\section{Related Work}
\subsection{Multi-Modal Learning}
Multi-modal learning aims to fuse different modalities from diverse sources~\cite{BaltrusaitisAM19, YangWZL019, YangZWLXJ21}. Existing multi-modal methods commonly employ model-agnostic approaches, which can be classified based on fusion stages~\cite{AtreyHEK10, BaltrusaitisAM19} into early fusion~\cite{perez2018film, NieYSW21, LiangQGCL22, ZengYMH24}, late fusion~\cite{liu2018late, DuWHLC22, AlfaroContrerasVIC23, LiuHXXZD24}, and hybrid fusion~\cite{joze2020mmtm, ZhengTWH023}. In detail, early fusion methods fuse features after extraction through either simple or meticulously designed networks. For example, concatenating features from different modalities to create a joint representation or applying affine transformations~\cite{perez2018film} to features for adaptive influence on the neural network's output has been explored. Additionally, late fusion, also termed prediction fusion, performs integration after each model makes predictions. Many studies have explored sophisticated late fusion methods beyond basic operations like summation and mean operations. For instance, \cite{yang2019comprehensive} introduced an additional attention network for adaptive weight assignment to modality predictions, \cite{NagraniYAJSS21} employed an innovative Transformer-based late fusion architecture with bottlenecks as channels for inter-modal information interaction. Meanwhile, Hybrid fusion~\cite{joze2020mmtm} endeavored to amalgamate the advantages of both approaches within the architecture and enhance the late fusion framework by incorporating a multi-modal transfer module to increase the interaction among features. However, the effectiveness of these approaches relies on the assumption that each modality makes a sufficient contribution throughout the joint training process.

\subsection{Imbalanced Multi-Modal Learning}
Recent studies~\cite{wang2020makes, du2021improving, sun2021learning,peng2022balanced, fan2023pmr, li2023boosting} have demonstrated that despite having access to more information in multi-modal learning, the performance improvement is often limited, and in some cases even worse than single-modal learning. Therefore, imbalanced multi-modal learning aims to rebalance the fitting speeds of non-dominant and dominant modalities, ensuring that each modality is fully utilized during the model training process. Based on this idea, \cite{wang2020makes} proposed a gradient blending technique that assigns different weights to branches based on the overfitting behavior of each modality, which aims to achieve optimal gradient blending. \cite{peng2022balanced, li2023boosting} utilized a dynamic gradient modulation strategy, which reduces the learning rate of the dominant modality. \cite{fan2023pmr} attempted to stimulate non-dominant modalities using prototype cross-entropy and employ prototypical entropy regularization to reduce dominant modality suppression. However, they always directly control the entire parameters update of each modality with unified weight. Furthermore, several attempts tried to employ extra networks to help multi-modal learning. For example, \cite{du2021improving} introduced additional uni-modal branches and distilled the features from these branches into the multi-modal network, \cite{wu2022characterizing} introduced a hierarchical interaction module based on inter-modal information gain for modal representation learning, which can stabilize the differences between modalities during the training process. Nevertheless, the addition of extra modules introduces intricacies into the training procedure.

\subsection{Subnetwork Optimization}
Subnetwork optimization strategies have demonstrated their effectiveness in mitigating the challenge of overfitting in deep neural networks. Current research in this domain can be categorized into two distinct approaches. The first category involves network pruning strategies during the forward process, exemplified by techniques such as DropConnect~\cite{WanZZLF13}, Gaussian Dropout~\cite{BlumHP15}, and Spatial Dropout~\cite{TompsonGJLB15}. These methodologies entailed the pruning of network nodes or connections, thereby reducing the scale of neural networks. The second category of methods~\cite{LeeCK20, xu2021raise, Zhang0LZZJ22} entailed retaining the complete set of network parameters for learning during the forward pass, and then specific neuron gradients are strategically masked to prevent their updates during the back-propagation. Nonetheless, it is worth noting that all subnetwork optimization techniques are devised with a focus on individual training models and are difficult to extend to the joint training of multi-modal models under scenarios characterized by data imbalance.

\section{Rebalance Multi-Modal Network}
In this section, we introduce the mask subnetworks~(MS) for multi-modal learning in subsection~\ref{sec:multimodal}. Then we employ MS for the multi-modal model to balance the learning speed across different modalities. Consequently, we propose the method named Adaptively Mask Subnetworks considering modal significance (AMSS). In subsection~\ref{sec:mehotd_details}, we elaborate on the adaptive construction process of mask subnetworks of the multi-modal model. In subsection~\ref{sec:theory}, we provide the theoretical convergence of AMSS and additionally introduce AMSS+, which is based on unbiased estimation.

\subsection{Preliminary}
\label{sec:multimodal}

For simplicity, we use boldface lowercase letters like ${\bf z}$ to denote the vectors, and the $i$-th element of ${\bf z}$ is denoted as $z_i$. Boldface uppercase letters like $\bf Z$ denote matrices and the element in the $i$-th row and $j$-th column of $\bf Z$ is denoted as $Z_{ij}$. We use numerical superscripts inside parentheses to denote specific modality, e.g., ${\bf z}^{(k)}$ denotes the $k$-th modality of $\bf z$. The notation $\mathcal{S}^{(k)}$ represents the parameter subnetwork set for the $k$-th modality. $\mathbb{E}(\cdot)$, $\mathbb{H}(\cdot)$ and $\mathbb{I}(\cdot)$ denote the exception, information entropy and mutual information, respectively. Furthermore,  we use the symbol $\odot$ to denote the Hadamard product~(i.e., element-wise product) of vectors/matrices.

Without any loss of generality, we first represent the training set as $\mathcal{D} =\left\{\left({\x}_1,{\y}_1\right),\left({\x}_2, {\y}_2\right), \cdots,\left({\x}_{N},{\y}_{N}\right)\right\}$, each example $\x_i$ is with $K$ modalities, i.e., ${\x}_{i}=\{\x_i^{(k)}\}_{k=1}^K$, $\y_i \in \{0,1\}^C$, $C$ is the class number. The goal is to use this dataset $\mathcal{D}$ to train a model that can predict $\y_i$ accurately.

Most multi-modal deep neural networks~\cite{NagraniYAJSS21, fan2023pmr, HanZFZ23} adopt multiple separate branches for the final prediction. These branches consist of multiple feature encoders, $\big\{\varphi^{(k)}(\x_i^{(k)})\big\}_{k=1}^K$, which aim to extract representations from the $\x_i$ data for each modality. Then, the multi-modal fusion operation can be denoted as $\varphi(\x_i)=[\varphi^{(1)}(\x_i^{(1)}); \varphi^{(2)}(\x_i^{(2)});\cdots;\varphi^{(K)}(\x_i^{(K)})]^\top$. Therefore, the final prediction in the multi-modal approach can be expressed as:
\begin{equation}
q(\x_i) = f(\varphi(\x_i)),\nonumber
\end{equation}
where $f$ denotes the classifier. Finally, our objective is to train the $f, \varphi$ to predict $\y$ based on $\x$, by minimizing the loss between the prediction and the ground truths:
\begin{equation}
\mathcal{L} = -\frac{1}{N} \sum_{i=1}^N{\bf y}_{i}^\top \log q(\x_i).\nonumber
\end{equation}

%
Considering the imbalanced multi-modal learning, previous research, such as OGM~\cite{peng2022balanced} and AGM~\cite{li2023boosting}, usually concentrate on modulating model gradients by manipulating learning rates, specifically assigning a lower gradient coefficient to dominant modalities, to rebalance multi-modal learning, which has been verified to be a valid way. 

In the rest of this section, we provide the details of the gradient updating with rebalanced strategy. For simplicity, we adopt the $k$-th modality for illustrating and omit the superscript ``$(k)$''. We represent the parameters at the $t$-th iteration as ${\bf w}({t})$. The parameter for a specific modality is updated by stochastic gradient descent~(SGD):
\begin{equation}
{\bf w}({t+1})={\bf w}({t}) - v(t) \cdot \eta \nabla \mathcal{L}({\bf w}(t)),\nonumber
\end{equation}
where ${\bf w}$ denotes the vectorized parameters of our model, $\nabla \mathcal{L}({\bf w}(t))$ is the gradient of loss function $\mathcal{L}({\bf w})$ at ${\bf w}(t)$ and $\eta>0$ is the learning rate. $v(t)$ is gradient modulation coefficient at the $t$-th iteration. If a modality is dominant, $v(t)<1$. Otherwise, $v(t)=1$ and the equation is equivalent to the standard parameter update method. However, these methods multiply the weight to each model gradient equally (but not all parameters contribute equally to the optimization of training), which we argue is sub-optimal and will lead to excessive computation. Therefore, we put forward MS that determines a gradient update subnetwork ${\mathcal{S}}(t)$ during the $t$-th iteration:
\begin{equation}
m_j{(t)}= \left\{\begin{array}{ll}
1,  & \text{if}\;\;w_j(t) \in {{\mathcal{S}}(t)}, \\
0,  & \text{otherwise},
 \end{array}\right.\nonumber
\end{equation}
where ${\bf m}(t)$ is the 0/1 mask vector with the same size of ${\bf w}$. The parameter updating can be reformulated as follows:
\begin{equation}
{\bf w}(t+1)={\bf w}(t)-\eta \nabla \mathcal{L}({\bf w}(t)) \odot {\bf m}(t).
\label{con:enhance_final}
\end{equation}

\begin{figure*}[htbp]
  \centering
  \includegraphics[trim=0 0 0 0, clip, width=0.9\textwidth]{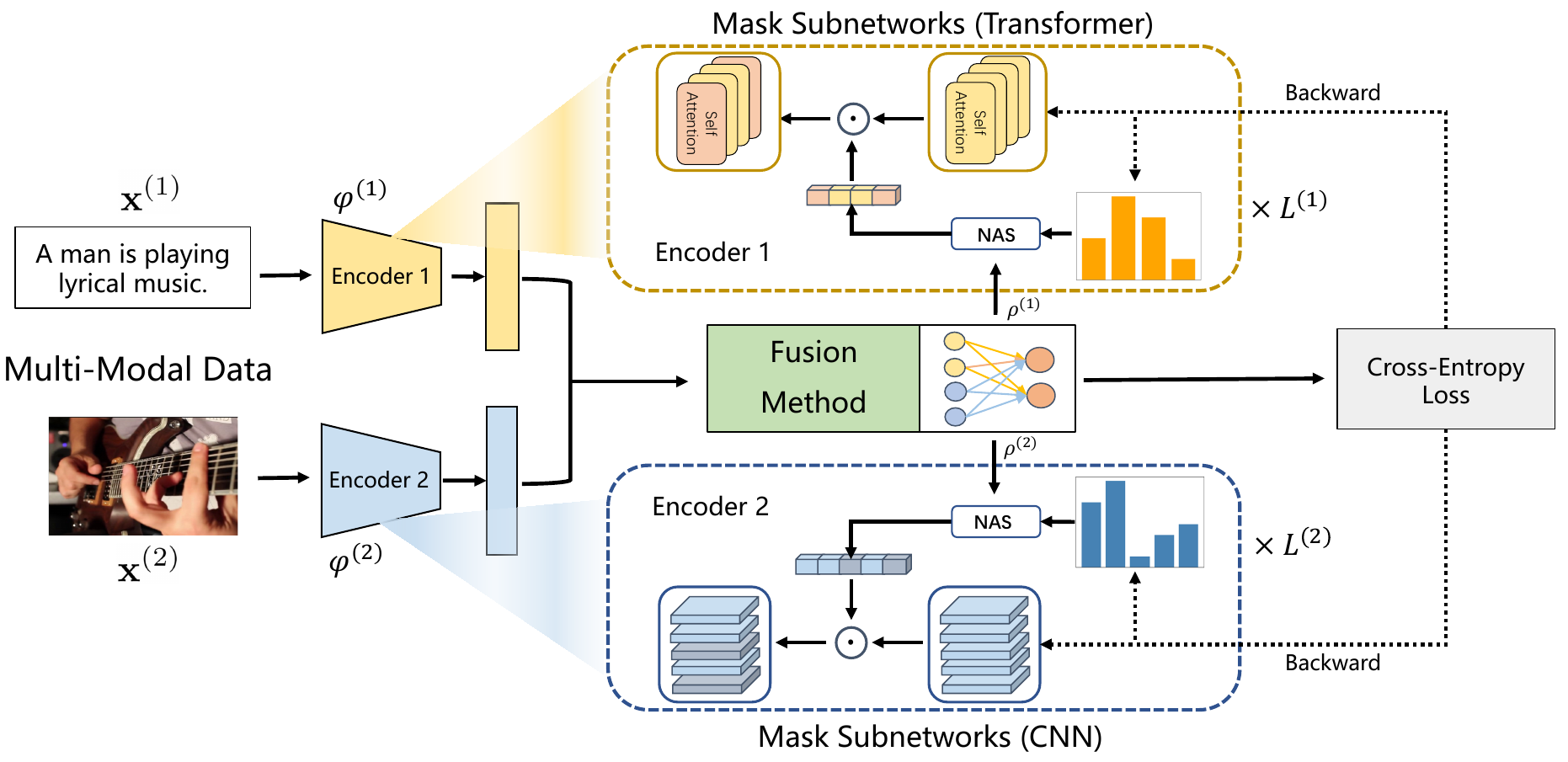}
  \vspace{-5pt}
  
  \caption{Overall framework of our proposed AMSS strategy, using the Transformer-CNN structure as an example.
}
  \vspace{-15pt}
  \label{fig:model}
\end{figure*}


\subsection{AMSS}
\label{sec:mehotd_details}
Based on preliminary experiments, we found that the MS strategy with uniform sampling performs excellently as shown in Table~\ref{tab:intro}. Additionally, inspired by importance sampling, which involves selecting parameters based on input data, we propose Adaptively Mask Subnetworks considering Modal Significance (AMSS), as illustrated in Figure~\ref{fig:model}.
The most crucial challenge of AMSS is how to adaptively obtain $\mathcal{S}^{(k)}(t)$ for each modality. This process primarily encompasses two problems: 1) how many parameters need to be selected for each modality and 2) what are the criteria for selecting parameters. Further details will be elaborated in the following content.

\subsubsection{Parameter Quantity Mask via Modal Significance}
In most multi-modal learning tasks, both modalities are assumed to be predictive of the target. Therefore, information theory can naturally state the modal significance, in particular, the mutual information~\cite{SridharanK08} $\mathbb{I}({\bf X}^{(k)}; {\bf Y})$ (between each modality and ground-truth) measure how much information is shared between ${\bf X}^{(k)}$ and ${\bf Y}$, which can be viewed as how much knowing ${\bf Y}$ reduces our uncertainty of ${\bf X}^{(k)}$. 

Furthermore, considering the strong correlation between mutual information and the information content in each modality, we turn to employ the straightforward yet effective mutual information rate for evaluation, which has placed greater emphasis on the inherent changes within each modality during the training process. The equation can be represented as:
\begin{equation}
\hat{u}^{(k)}=\frac{\mathbb{I}({\bf X}^{(k)}; {\bf Y})}{\mathbb{H}({\bf X}^{(k)})},\nonumber
\end{equation}
where ${\bf X}^{(k)}$ denotes a batch of $k$-th modal examples, ${\bf Y}$ denotes the set of corresponding ground-truths. $\mathbb{H}({\bf X}^{(k)})$ denotes the information entropy of of $k$-th modality. Intuitively, when the mutual information rate of a modality is higher, it signifies that this modality has a significant impact on the predictive task. This is because this modality can reduce uncertainty in the predictive task, thereby enhancing predictive capability. However, due to the inherent challenge of directly estimating mutual information in high-dimensional space as~\cite{BelghaziBROBHC18}, conducting such an estimation is practically infeasible. To facilitate the practical utilization of mutual information, we employ variational bounds to approximate its true value following~\cite{ChengHDLGC20}:

\begin{equation}
\label{con:mutual}
\mathbb{I}({\bf X}^{(k)}; {\bf Y}) \geq \mathbb{H}({\bf Y}) + \mathbb{E}_{p(\x^{(k)}, \y)}[\log q(\y \mid \x^{(k)})], \\ 
\end{equation}
\begin{equation}
\begin{split}
\mathbb{E}_{p(\x^{(k)}, \y)}[\log q(\y \mid \x^{(k)})] &= 
\frac{1}{B} \sum_{i=1}^B \log q\left(\y_i\mid \x_i^{(k)}\right), \\
\mathbb{H}({\bf Y}) &=  -\sum_{c=1}^C p(\y_c) \log p(\y_c),
\end{split}\nonumber
\end{equation}
where, $B$ denotes the batch size, $\mathbb{H}({\bf Y})$ represents the information entropy of the ground-truth ${\bf Y}$ and $p(\y_c)= \frac{1}{B} \sum_{i=1}^B y_{i_c}$. The Barber-Agakov lower bound in Equation \ref{con:mutual} or is tight, i.e., there is no gap between the bound and truth value, when $p(\y_i\mid \x_i^{(k)})= q(\y_i\mid \x_i^{(k)})$. Therefore, it is necessary to train a classifier predicting $q(\y_i \mid \x_i^{(k)})$ to approximate $p(\y_i\mid \x_i^{(k)})$. 
For different fusion methods, the computation of $q(\y_i\mid \x_i^{(k)})$ varies. 
In late fusion, $q(\y_i \mid \x_i^{(k)}) = \mathrm{softmax}(f^{(k)}(\varphi^{(k)}(\x_i^{(k)}))_{\hat{y}_i}$, where $\hat{y}_i= \mathrm{argmax}(\y_i)$ and each modality has its own classifier $f^{(k)}$ for prediction. In early and hybrid fusion, we use zero-padding to represent features excluding the \(k\)-th modality, i.e., \(\{\varphi^{(n)}(\x_i^{(n)}) = \textbf{0}^{(n)}\}_{n\in \mathcal{N}/\{k\}}\), where \(\mathcal{N} = \{1,2,\ldots,K\}\) is the set of all modalities. Subsequently, \(q(\y_i \mid \x_i^{(k)}) = \mathrm{softmax}(f(\varphi(\x_i)))_{\hat{y}_i}\).

Besides, to reduce fluctuations and stabilize the learning process, we employ the momentum update method, which evaluates the significance of the current modalities during the training phase by accumulating the modal significance obtained from the historical training data:
\begin{equation}
u^{(k)}= \lambda u^{(k)} + (1-\lambda) \hat{u}^{(k)},
\label{con:contribution_u}
\end{equation}
where $\lambda$ is the attenuation factor. To address the problem of modality imbalance, we need to provide a greater advantage to the non-dominant modality during network updates while suppressing the dominant modality. Therefore, we mask fewer parameters for the non-dominant modality and comparatively more parameters for the dominant modality. Based on this idea, we first design the update ratio based on the modal significance $u^{(k)}$:
\begin{equation}
\begin{split}
\rho^{(k)}=1 - \frac{\exp (u^{(k)}/\tau) }{\sum_{n=1}^{K} \exp (u^{(n)}/\tau)}, \\
\label{con:mask_ratio}
\end{split}
\end{equation}

where $\tau$ serves as a hyper-parameter designed to adjust the size disparities across subnetworks of different modalities. Specifically, setting $\tau < 1$ amplifies these disparities, whereas $\tau > 1$ diminishes them. Furthermore, insights drawn from Table 1 suggest a preliminary conclusion: a higher ratio of non-dominant modal parameter updates relative to dominant ones often leads to improved model performance. Consequently, it is advisable to select $\tau < 1$ in experimental setups to amplify the disparities among subnetworks, a strategy that our subsequent experiments have confirmed to be effective.

\subsubsection{Task-Guided Parameter Mask Criteria}
As demonstrated in~\cite{SouratiAM19, SinghA20, xu2021raise}, it is evident that parameters with Fisher information estimation~\cite{fisher1922mathematical} play a pivotal role in learning the target task. Therefore, we adopt the Fisher information estimation as the selection criteria, which can provide an estimation of how much information a random variable carries about a parameter of the distribution~\cite{TuBCS16}, and measure the relative
significance of the parameters. In particular, the Fisher information of ${\bf w}^{(k)}$ can be represented as:
\begin{equation}\Scale[0.985]{
\mathbf{F}({\bf w}^{(k)})=\mathbb{E}\big[\big(\frac{\partial \log p(\hat{y} \mid \x^{(k)} ; {\bf w}^{(k)})}{\partial {\bf w}^{(k)}}\big)\big(\frac{\partial \log p(\hat{y} \mid  \x^{(k)} ; {\bf w}^{(k)})}{\partial {\bf w}^{(k)}}\big)^{\top}\big].}\nonumber
\end{equation}

Actually, $\mathbf{F}({\bf w}^{(k)})$ can be viewed as the covariance of the gradient of the log-likelihood with respect to the parameters ${\bf w}^{(k)}$. Following~\cite{kirkpatrick2017overcoming}, given the batch data, we use the diagonal elements of the empirical Fisher information matrix to estimate the significance of the parameters. Formally, we derive the Fisher information for the $j$-th parameter as follows:
\begin{equation}
{F}_{j}({\bf w}^{(k)})=\frac{1}{B} \sum_{i=1}^{B}\big(\frac{\partial \log p(\hat{y}_i \mid \x^{(k)}_i ; {\bf w}^{(k)})}{\partial w^{(k)}_j}\big)^2.\nonumber
\end{equation}

Furthermore, we normalize the diagonal element as the importance of parameters:
\begin{equation}
p^{(k)}_j=\frac{{F}_{j}({\bf w}^{(k)})}{\sum_{j=1}^{|{\bf w}^{(k)}|} {F}_{j}\left({\bf w}^{(k)}\right)}.\nonumber
\end{equation} 

As a result, the more important the parameter towards the target task, the higher $p^{(k)}_j$ it conveys. We set a probability distribution ${\bf P}^{(k)}=\{p^{(k)}_1, p^{(k)}_2, \ldots, p^{(k)}_{|{\bf w}^{(k)}|}\}$. Then, we employ the non-uniform adaptive sampling~\cite{gopal2016adaptive} without replacement for parameter selection. This approach allows $\mathcal{S}^{(k)}$ to concentrate on parameters associated with high information content. In contrast to the method of directly selecting the highest information parameters, it can encompass a more extensive scope of parameters through probabilistic sampling. Subsequently, based on the update ratio $\rho^{(k)}$, we conduct the sampling process to construct the parameter subnetwork set for each modality:
\begin{equation}
\mathcal{S}^{(k)}=\{s^{(k)}_1, s^{(k)}_2, \ldots, s^{(k)}_{\left\lceil \rho^{(k)}*|\w^{(k)}|\right\rceil}\},\nonumber
\end{equation}
where $s^{(k)}_i$ represents a parameter in the $k$-th modal network that is selected during the $i$-th sampling.
Subsequently, combining with Equation \ref{con:enhance_final}, we can derive parameter the gradient update strategy of AMSS as follows:
\begin{equation}
\begin{split}
{\bf w}^{(k)}(t+1)={\bf w}^{(k)}(t)-\eta \nabla \mathcal{L}({\bf w}^{(k)}(t)) \odot {\bf m}^{(k)}(t).\nonumber
\end{split}
\end{equation}

Given the considerable quantity of parameters in encoders, executing non-uniform adaptive sampling at the network parameter level is practically unfeasible due to its significant time overhead. To address this, we introduce the concept of the mask unit as a replacement for masking individual parameters and perform sampling based on this. Here we use the ResNet network (CNN-based network)~\cite{he2016deep} as an example. In Resnet architecture , each convolutional layer deploys multiple convolutional kernels to extract features spanning different scales, in which we define a mask unit corresponding to a convolutional kernel. We consider the parameters within the mask unit as a set, simultaneously masking or selecting the parameters within it. The previous notion of parameter importance is transformed into convolutional kernel importance, which can be computed by summing the Fisher information of all parameters within a convolutional kernel. Furthermore, in order to prevent the situation where a majority of convolutional kernels in a layer remain unsampled, we apply the update ratio to each convolutional layer, performing sampling across them. Note that the Transformer-based network can be masked similarly to the CNN-based network, by considering the masked self-attention heads in the multi-head attention module.  

\subsection{Theoretical Analysis and Improved AMSS}
\label{sec:theory}
In this subsection, we conduct a theoretical analysis of the convergence properties for updating parameters in the Mask Subnetwork under the non-convex optimization setting. Based on Equation~\ref{con:enhance_final}, we suppose that the stochastic gradient $\nabla \ell({\bf w}(t))$ is unbiased, i.e., $\mathbb{E}[\nabla \ell({\bf w}(t))] = \nabla\mathcal{L}({\bf w}(t))$, which is commonly used in non-convex optimization. However, under the 0/1 mask strategy, $\nabla \ell({\bf w}(t))$ and ${\bf m}(t)$ are not independent, the stochastic gradient $\nabla \ell({\bf w}(t)) \odot {\bf m}(t)$ is biased, that is, $\mathbb{E}[\nabla \ell({\bf w}(t))  \odot {\bf m}(t)] \neq \nabla\mathcal{L}({\bf w}(t))$. Under the Mask-Incurred Error assumption, we have the following convergence result for AMSS. We include more details and the proof in Appendix.
\begin{thm}[Informal, AMSS]
Under some assumptions for the stochastic gradient $\nabla \ell({\bf w}(t)) \odot {\bf m}(t)$, we have
\begin{align}
\frac{1}{T}\sum_{t=1}^{T}\|\nabla\mathcal{L}({\bf w}(t)) \|^2 \le O\left(\frac{1+(1+\nu)^2}{\sqrt{T}(1+\nu)(1-\delta^2)}\right), \nonumber
\end{align} 
\noindent where $\delta \in (0,1)$ and $\nu\ge0$ are two constants.
\end{thm}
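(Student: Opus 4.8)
The plan is to run the classical descent-lemma argument for non-convex SGD, but to substitute the usual unbiasedness of the stochastic direction with the Mask-Incurred Error assumption, which is exactly what preserves the masked update as a descent direction. Writing $\mathbf{g}(t) = \nabla\ell(\w(t))\odot{\bf m}(t)$ for the realized masked gradient, I would work from three hypotheses that I expect the Appendix to formalize as the Mask-Incurred Error assumption: (i) $\mathcal{L}$ is $L$-smooth and bounded below by some $\mathcal{L}^\star$; (ii) an alignment bound $\langle\nabla\mathcal{L}(\w(t)),\,\E_t[\mathbf{g}(t)]\rangle \ge (1-\delta^2)\,\|\nabla\mathcal{L}(\w(t))\|^2$; and (iii) a second-moment growth bound $\E_t\|\mathbf{g}(t)\|^2 \le \sigma^2 + (1+\nu)\,\|\nabla\mathcal{L}(\w(t))\|^2$, where $\E_t$ denotes conditioning on $\w(t)$. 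Hypothesis (ii) is the crux: since the Fisher-information mask is correlated with the gradient, $\E_t[\mathbf{g}(t)]\neq\nabla\mathcal{L}(\w(t))$, yet the masked direction still retains strictly positive correlation with the true gradient, and $\delta\in(0,1)$ quantifies the worst-case misalignment.

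First I would apply smoothness to the update $\w(t+1)=\w(t)-\eta\,\mathbf{g}(t)$ to get $\mathcal{L}(\w(t+1)) \le \mathcal{L}(\w(t)) - \eta\langle\nabla\mathcal{L}(\w(t)),\mathbf{g}(t)\rangle + \tfrac{L\eta^2}{2}\|\mathbf{g}(t)\|^2$, then take $\E_t$ of both sides. Applying (ii) to the linear term and (iii) to the quadratic term yields
$$\E_t[\mathcal{L}(\w(t+1))] \le \mathcal{L}(\w(t)) - \eta\Big[(1-\delta^2)-\tfrac{L\eta(1+\nu)}{2}\Big]\|\nabla\mathcal{L}(\w(t))\|^2 + \tfrac{L\eta^2\sigma^2}{2}.$$
The next step is to restrict the step size to $\eta \le \frac{1-\delta^2}{L(1+\nu)}$, which forces the bracketed coefficient to be at least $\tfrac{1-\delta^2}{2}$; this is precisely the point where the descent gain $\eta(1-\delta^2)\|\nabla\mathcal{L}\|^2$ must be shown to dominate the second-order penalty, and where the $(1+\nu)$ factor enters the admissible step-size range. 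Taking total expectations, telescoping over $t=1,\dots,T$, and using $\mathcal{L}(\w(T+1))\ge\mathcal{L}^\star$ then gives $\tfrac{1-\delta^2}{2}\cdot\tfrac{1}{T}\sum_{t=1}^{T}\E\|\nabla\mathcal{L}(\w(t))\|^2 \le \frac{\mathcal{L}(\w(1))-\mathcal{L}^\star}{\eta T} + \frac{L\eta\sigma^2}{2}$.

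Finally I would pick $\eta = \frac{1+\nu}{\sqrt{T}}$ (which is admissible once $T \ge L^2(1+\nu)^4/(1-\delta^2)^2$, so the constraint above holds), making the optimality-gap term scale like $\frac{1}{(1+\nu)\sqrt{T}}$ and the variance term like $\frac{1+\nu}{\sqrt{T}}$. Dividing through by $\tfrac{1-\delta^2}{2}$ and combining the two contributions as $\frac{1}{1+\nu}+(1+\nu) = \frac{1+(1+\nu)^2}{1+\nu}$ reproduces the claimed $O\!\big(\frac{1+(1+\nu)^2}{\sqrt{T}(1+\nu)(1-\delta^2)}\big)$ rate. I expect the main obstacle to be justifying the alignment bound (ii) rather than the mechanical SGD calculation: because ${\bf m}(t)$ is drawn from Fisher-information weights derived from the very gradient it multiplies, $\mathbf{g}(t)$ is genuinely biased, and the whole argument rests on the Mask-Incurred Error assumption guaranteeing that the retained coordinates still capture a $(1-\delta^2)$ fraction of the gradient's energy. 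Verifying this for the non-uniform adaptive sampling scheme, and tracking the dependence on $\delta$ and $\nu$ cleanly through the step-size balance, is the delicate part.
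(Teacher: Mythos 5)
Your proposal is correct and follows the same skeleton as the paper's Appendix proof (Case I): descent lemma for the masked update, a $\nu$-type relative variance bound, a $\delta$-type control of the mask-induced bias, telescoping, and $\eta \sim 1/\sqrt{T}$. The differences are in how the bias is packaged. You postulate the alignment bound $\langle \nabla\mathcal{L}({\bf w}(t)), \mathbb{E}_t[{\bf g}(t)]\rangle \ge (1-\delta^2)\|\nabla\mathcal{L}({\bf w}(t))\|^2$ directly, whereas the paper's Mask-Incurred Error assumption is a bias-norm bound, $\|b({\bf w}(t))\| = \|\mathbb{E}[{\bf g}(t)] - \nabla\mathcal{L}({\bf w}(t))\| \le \delta\|\nabla\mathcal{L}({\bf w}(t))\|$, and the $(1-\delta^2)$ factor is \emph{derived} via the identity $-\langle a,b\rangle + \tfrac{1}{2}\|b\|^2 = \tfrac{1}{2}\|a-b\|^2 - \tfrac{1}{2}\|a\|^2$, which converts the cross term plus the quadratic mean term into $\tfrac{\eta}{2}\|b\|^2 - \tfrac{\eta}{2}\|\nabla\mathcal{L}\|^2 \le -\tfrac{\eta(1-\delta^2)}{2}\|\nabla\mathcal{L}\|^2$. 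This is worth noting because the two formulations are not term-by-term interchangeable: the bias-norm bound alone implies alignment only with constant $(1-\delta)$ via Cauchy--Schwarz, and the sharper $(1-\delta^2)$ emerges in the paper only because the complete-the-square step harvests a cancellation against the $\tfrac{\eta}{2}\|\nabla\mathcal{L}+b\|^2$ term; correspondingly, the paper's variance assumption is centered at the biased mean ($\nu\|\nabla\mathcal{L}+b\|^2 + \sigma^2$) rather than at the true gradient as in your hypothesis (iii) -- equivalent up to constants given the bias bound, but structurally different. Your step size $\eta = (1+\nu)/\sqrt{T}$ also inverts the paper's $\eta = \frac{1}{(1+\nu)L\sqrt{T}}$: in the paper the initialization term carries the $(1+\nu)$ and the $\sigma^2$ term the $1/(1+\nu)$, in yours the roles are swapped, and yours additionally needs $T \ge L^2(1+\nu)^4/(1-\delta^2)^2$ while the paper's constraint $\eta \le \frac{1}{(1+\nu)L}$ holds for all $T$ and is $\delta$-free; both yield the stated $O\bigl(\frac{1+(1+\nu)^2}{\sqrt{T}(1+\nu)(1-\delta^2)}\bigr)$ order. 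Finally, on the point you flag as delicate -- verifying the alignment property for the Fisher-information-based non-uniform sampling -- the paper does not do this either: its Assumption 3 is simply imposed, and indeed the paper's motivation for AMSS+ is precisely to replace this unverified assumption by an unbiased $0/\tfrac{1}{p_j}$ mask.
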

The result shows that AMSS converges to a stationary point with the rate of $O\left(\frac{1+(1+\nu)^2}{\sqrt{T}(1+\nu)(1-\delta^2)}\right)$, which is less worse than the $O\left(\frac{1+(1+\nu)^2}{\sqrt{T}(1+\nu)}\right)$ (i.e., $\delta=0$, without masking) due to the Mask-Incurred Error assumption. To relax this assumption, we tend to propose another importance sampling strategy. 

To this end, we propose a novel mask strategy that eliminates the bias of stochastic gradient $\nabla \ell({\bf w}(t)) \odot {\bf m}(t)$. 
\begin{equation}
\hat{m}_j({t})= \left\{\begin{array}{ll}
\frac{1}{p_j},  & {\text{if}}\;\;w^{(k)}_j(t) \in {{\mathcal{S}^{(k)}}}(t), \\
0,  & \text{otherwise}.
\end{array}\right.
\label{con:Mask_enhance}
\end{equation}
Similarly, we have the following convergence result for the new smapling method. 
\begin{thm}[Informal, AMSS+]
Under some assumptions for the stochastic gradient $\nabla \ell({\bf w}(t)) \odot \hat{\bf m}(t)$, we have
\begin{align}
\frac{1}{T}\sum_{t=1}^{T}\|\nabla\mathcal{L}({\bf w}(t)) \|^2 \le O\left(\frac{1+(1+\nu)^2}{\sqrt{T}(1+\nu)}\right), \nonumber
\end{align}
\noindent where $\delta \in (0,1)$ and $\nu\ge0$ are two constants.
\end{thm}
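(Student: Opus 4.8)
The plan is to run the standard non-convex SGD descent analysis, but to exploit the one structural feature that distinguishes AMSS+ from AMSS: the rescaled mask $\hat{\bf m}(t)$ of Equation~\ref{con:Mask_enhance} restores \emph{exact} unbiasedness of the masked stochastic gradient. I would first record this fact. Write ${\bf g}(t) = \nabla \ell({\bf w}(t)) \odot \hat{\bf m}(t)$ and let $p_j$ denote the marginal probability that coordinate $j$ is drawn into the sampled subnetwork $\mathcal{S}^{(k)}(t)$. Conditional on the current stochastic gradient (and hence on the induced Fisher-based probabilities), averaging over the selection randomness gives $\mathbb{E}[\hat{m}_j(t)] = p_j\cdot(1/p_j) + (1-p_j)\cdot 0 = 1$, so the importance weight $1/p_j$ precisely cancels the inclusion probability and $\mathbb{E}[{\bf g}(t)] = \mathbb{E}[\nabla \ell({\bf w}(t))] = \nabla \mathcal{L}({\bf w}(t))$. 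This is exactly the property the $0/1$ mask of AMSS lacked: there $\mathbb{E}[\nabla\ell_j\, m_j] = p_j\,\nabla\ell_j \neq \nabla\ell_j$, which is what forced the Mask-Incurred Error assumption and the $\delta$ penalty.

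With unbiasedness in hand I would apply the $L$-smoothness descent lemma to the update ${\bf w}(t+1) = {\bf w}(t) - \eta\,{\bf g}(t)$ (the specialization of Equation~\ref{con:enhance_final} to $\hat{\bf m}$), giving
\begin{equation}
\mathbb{E}\,\mathcal{L}({\bf w}(t+1)) \le \mathcal{L}({\bf w}(t)) - \eta\,\langle \nabla\mathcal{L}({\bf w}(t)),\, \mathbb{E}\,{\bf g}(t)\rangle + \frac{L\eta^2}{2}\,\mathbb{E}\|{\bf g}(t)\|^2. \nonumber
\end{equation}
Because of Step~1 the cross term collapses to $-\eta\|\nabla\mathcal{L}({\bf w}(t))\|^2$ with no $(1-\delta)$ degradation, and this is the single place where AMSS+ improves on AMSS. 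For the second-order term I would invoke the second-moment assumption on the masked gradient, which I expect to take the growth form $\mathbb{E}\|{\bf g}(t)\|^2 \le \sigma^2 + (1+\nu)\|\nabla\mathcal{L}({\bf w}(t))\|^2$, with the constant $\nu$ absorbing both the intrinsic stochastic-gradient variance and the inflation produced by the $1/p_j$ reweighting.

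I would then rearrange into the one-step inequality $\eta\bigl(1 - \tfrac{L\eta(1+\nu)}{2}\bigr)\|\nabla\mathcal{L}({\bf w}(t))\|^2 \le \mathcal{L}({\bf w}(t)) - \mathbb{E}\,\mathcal{L}({\bf w}(t+1)) + \tfrac{L\eta^2\sigma^2}{2}$, impose $\eta \le \tfrac{1}{L(1+\nu)}$ so the coefficient stays at least $\tfrac12$, sum over $t=1,\dots,T$ so the function values telescope to $\mathcal{L}({\bf w}(1)) - \mathcal{L}^{\star}$, and divide by $T$. This yields $\tfrac1T\sum_t\|\nabla\mathcal{L}({\bf w}(t))\|^2 \le \tfrac{2\Delta}{\eta T} + L\eta\sigma^2$. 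Choosing $\eta = \Theta\!\bigl(\tfrac{1}{(1+\nu)\sqrt{T}}\bigr)$ (which respects the stability constraint for large $T$) makes the optimization term scale like $\tfrac{1+\nu}{\sqrt T}$ and the variance term like $\tfrac{1}{(1+\nu)\sqrt T}$; their sum is $\Theta\!\bigl(\tfrac{1}{\sqrt T}\bigl[(1+\nu)+\tfrac{1}{1+\nu}\bigr]\bigr) = O\!\bigl(\tfrac{1+(1+\nu)^2}{\sqrt{T}(1+\nu)}\bigr)$, exactly the stated rate, with the $(1-\delta^2)$ factor of Theorem~1 now absent as the dividend of unbiasedness.

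The hard part will be the second-moment control of ${\bf g}(t)$: the importance weights $1/p_j$ blow up precisely for coordinates of small Fisher information, which are sampled with tiny probability, so $\mathbb{E}\|{\bf g}(t)\|^2$ is not automatically finite and must be tamed by the hypothesis packaged into $\nu$ (e.g.\ a uniform floor on the $p_j$, or an explicit bounded-variance assumption that dominates the reweighted variance). A secondary technical point I would make explicit is justifying $\mathbb{E}[\hat{m}_j(t)] = 1$ under sampling \emph{without} replacement of a fixed-size subnetwork, where the marginal inclusion probability need not equal the per-draw weight; I would fix the modeling convention that the $p_j$ appearing in Equation~\ref{con:Mask_enhance} denotes the marginal inclusion probability, so that the importance correction is genuinely unbiased and the cross-term identity of Step~2 holds without approximation.
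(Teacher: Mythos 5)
Your proposal is correct and follows essentially the same route as the paper's appendix proof: unbiasedness of $\nabla\ell({\bf w}(t))\odot\hat{\bf m}(t)$ via the tower property and cancellation of $1/p_j$, the $L$-smoothness descent lemma, a relative-growth second-moment bound $\mathbb{E}\|\hat{\bf g}(t)\|^2 \le (1+\nu)\|\nabla\mathcal{L}({\bf w}(t))\|^2+\sigma^2$ (the paper's Assumption~4 plus the bias--variance decomposition), the step-size constraint $\eta\le\frac{1}{(1+\nu)L}$, telescoping, and $\eta=\frac{1}{(1+\nu)L\sqrt{T}}$ yielding the $O\bigl(\frac{1+(1+\nu)^2}{\sqrt{T}(1+\nu)}\bigr)$ rate with the $(1-\delta^2)$ factor gone. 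Your two caveats are also consistent with the paper, which simply assumes the bounded variance (absorbing the $1/p_j$ inflation into $\nu$) and defines $p_i(t)=\mathbb{P}(w_i(t)\in s_i(t))$ as the marginal inclusion probability, exactly the convention you propose.
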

Inspired by the theoretical foundation, we propose an improved method AMSS+. However, when the dimension of $\mathbf{m}$ is high, the value of $p_j$ is usually too small so that $\frac{1}{p_j}$ is too large, which may lead to gradient explosion. Thus, in practice, we replace $\frac{1}{p_j}$
by $\frac{1}{p_j+a}$, where $a\in [0,1)$ is a constant used to eliminate gradient explosion. 
To facilitate the selection of the hyper-parameter $a$, we directly substitute the number of mask units $L_l^{(k)}$ of $l$-th layer in the network for $a$. Therefore, the Equation~\ref{con:Mask_enhance} can be modified to:
\begin{equation}
\hat{m}^{(k)}_j(t)= \left\{\begin{array}{ll}
\frac{1}{p^{(k)}_j+L_l^{(k)}},  & {\text{if}}\;\;w^{(k)}_j(t) \in {{\mathcal{S}^{(k)}}}(t), \\
0,  & \text{otherwise}. \nonumber
\end{array}\right.
\end{equation}
Considering mask subnetworks under the unbiased estimation theory, we further propose the parameter gradient update strategy of AMSS+ as follows:
\begin{equation}
\begin{split}
{\bf w}^{(k)}(t+1)={\bf w}^{(k)}(t)-\eta \nabla \mathcal{L}({\bf w}^{(k)}(t)) \odot \hat{\bf m}^{(k)}(t).\nonumber
\end{split}
\end{equation}

In summary, the aforementioned paragraph elucidates the strategies of both AMSS and AMSS+, with the latter distinguished by its utilization of the 0/$\frac{1}{p_j}$ masking approach.
%

\section{Experiment}
\begin{table*}[!htbp]
    \centering
    \caption{Comparison between AMSS with other SOTA methods on four datasets. The optimal performances are highlighted in \textbf{bold}. The  \underline{underscore} symbol represents the second best performance.}  
    \label{tab:Sota_compare}
    \normalsize
    \begin{tabular}{cl|cc|cc|cc|cc}
    \Xcline{1-10}{0.7pt}
    \multicolumn{2}{c|}{\multirow{2}{*}{Methods}} &
          \multicolumn{2}{c|}{{Kinetics-Sound}} &
          \multicolumn{2}{c|}{{CREMA-D}} &
          \multicolumn{2}{c|}{{Sarcasm-Detection}} &
          \multicolumn{2}{c}{{Twitter-15}} \\
     
        & & ACC & mAP & ACC & mAP & ACC & Mac-F1 & ACC & Mac-F1 \\ \midrule
        \multirow{2}{*}{Uni-modal}&Audio/Text & 54.12 & 56.69 & 63.17 & 68.61  & 81.36  & 80.65  & 73.67  & 68.49  \\ 
        &Video/Image & 55.62  &58.37  & 45.83  & 58.79  & 71.81  & 70.73  & 58.63  & 43.33  \\ \hline
        &Concat & 64.55  & 71.30  & 63.31  & 68.41  & 82.86  & 82.40  & 70.11  & 63.86  \\ 
        &Affine & 64.24  & 69.31  & 66.26  & 71.93  & 82.40  & 81.88  & 72.03  & 59.92  \\ 
        &Channel & 63.51  & 68.66  & 66.13  & 71.70  & - & - & - & - \\ 
        {Multi-modal Fusion}  &ML-LSTM & 63.94 & 69.02 & 62.90 & 64.73& 82.77 &	82.05 &	70.68 &	65.64 \\ 
        &Sum & 64.90  & 71.03  & 63.44  & 69.08  & 82.94  & 82.47  & 73.00  & 66.61  \\ 
        &Weight & 65.33  & 71.10  & 66.53  & 73.26  & 82.65  & 82.19  & 72.42  & 65.16  \\ 
        &ETMC & 65.67 &	72.50 &	65.86 &	71.34 &	83.69 &	83.23 &	73.96 &	67.39 
 \\ \hline
        &MSES&64.71 &	70.63 &	61.56 &	66.83 &	84.18 &	83.60 &71.84 &	66.55\\
        &OGR-GB & {67.10}  & 71.39  & 64.65  & 68.54  & 83.35  & 82.71  & 74.35  & 68.69  \\
        &OGM-GE & 66.06  & 71.44  & 66.94  & 71.73  & 83.23  & 82.66  & {74.92}  & 68.74  \\ 
        &Greedy & 66.52  & {72.81}  & 66.64  & 72.64  & - & - & - & - \\
        Multi-modal&DOMFN & 66.25  & 72.44  & {67.34}  & {73.72}  & 83.56  & 82.62  & 74.45  & 68.57  \\ 
        Rebalance Fusion&MSLR&65.91 &	71.96 &	65.46 &	71.38 &	\underline{84.23} &	\underline{83.69} &72.52 	& 64.39 \\
        &PMR & 66.56  & 71.93  & 66.59  & 70.30  & 83.60  & 82.49  & 74.25  & 68.60  \\ 
        &AGM & 66.02 &	72.52 &	67.07 &	73.58 &	84.02 &	83.44 &	74.83 	&{69.11} 
 \\ 
        &AMSS  & \underline{68.96}  & \underline{74.89}  & \underline{67.61}  & \underline{73.97}  & 84.14  & \underline{83.69}  & \textbf{75.89}  & \textbf{69.81}  \\
        &AMSS+ & \textbf{72.25}  & \textbf{79.13}  & \textbf{70.30}  & \textbf{76.14}  & \textbf{84.35}  & \textbf{83.77}  & \underline{75.12}  & \underline{69.23} \\ 
        \Xcline{1-10}{0.7pt}
    \end{tabular}
\end{table*}

\subsection{Experimental Setups}
\textbf{Datasets.} Following the prior research considering multi-modal imbalance~\cite{peng2022balanced, fan2023pmr}, we adopt the \textbf{Kinetics-Sound}~\cite{arandjelovic2017look} and \textbf{CREAM-D}~\cite{cao2014crema} datasets for validation, which includes audio and video modalities. To further validate the effectiveness of the proposed method, our research is extended in two dimensions. Firstly, the analysis is expanded to encompass the text-image modality, incorporating the \textbf{Sarcasm Detection}~\cite{cai2019multi} and \textbf{Twitter-15}~\cite{yu2019adapting} datasets. Secondly, we employ the \textbf{NVGesture}~\cite{MolchanovYGKTK16} dataset to conduct research that goes beyond the limitation of two modalities.

Kinetics-Sound is used for video action recognition. It comprises 31 human action categories. The dataset contains a total of 19,000 10-second video clips (15k training set, 1.9k validation set, 1.9k test set). CREMA-D is designed for speech emotion recognition. It consists of 7,442 original clips. These clips are divided into 6,698 samples for the training set and 744 samples for the test set. CREMA-D can be categorized into six emotions. Sarcasm-Detection is designed for the task of sarcasm detection. It consists of 24,635 text-image pairs (19,816 training set, 2,410 validation set, 2,409 test set). The dataset is categorized into two classes. Twitter-15 is used for emotion recognition tasks, consisting of three classes. The data is collected from Twitter data~\cite{0001FLH18}, which comprises 5338 text-image pairs (3,179 training set, 1,122 validation sets, and 1,037 test set). NVGesture is collected using multiple sensors to investigate human-computer interfaces. It encompasses 1532 dynamic hand gestures (1,050 training set and 482 test set). We randomly sample 20\% of training examples as the validation set, following~\cite{wu2022characterizing}, and use RGB, depth, and optical flow modalities in our setting. This dataset comprises 25 classes of hand gestures.  

\noindent\textbf{Baselines.} We compare two categories of methods: 1) fusion methods considering modal rebalancing strategies, and 2) traditional fusion methods. In detail, modal rebalancing fusion methods include ORG-GB~\cite{wang2020makes}, MSES~\cite{fujimori2020modality}, OGM-GE~\cite{peng2022balanced}, Greedy~\cite{wu2022characterizing}, DOMFN~\cite{yang2022domfn}, MSLR~\cite{yao2022modality}, PMR~\cite{fan2023pmr}, AGM~\cite{li2023boosting}. The traditional fusion methods encompass feature concatenation fusion, affine transformation fusion~\cite{perez2018film}, channel-wise fusion~\cite{joze2020mmtm}, multi-layers lstm fusion~\cite{NieYSW21}, prediction summation fusion, prediction weight fusion~\cite{yang2019comprehensive} and ETMC~\cite{HanZFZ23}.  For convenience in the description, we abbreviate these methods as Concat, Affine, Channel, ML-LSTM, Sum, and Weight. Concat involves concatenating multiple modal features to obtain high-dimensional features, which are then used for downstream tasks. Affine applies a feature-wise affine transformation to intermediate neural network features. Channel recalibrates channel features of different CNN streams through the squeeze and multi-modal excitation steps. ML-LSTM method employs the LSTM structure to fully integrate various modalities. Sum predicts by calculating the probability mean of various modal prediction results. Weight assigns a weight to each individual modal branch using an attention mechanism. ETMC dynamically evaluates the trustworthiness of each modality across various samples, ensuring dependable integration.

\noindent\textbf{Evaluation Metrics.} We use accuracy (Acc) and mean Average Precision (mAP) for audio-video datasets following~\cite{peng2022balanced}. For the text-image dataset and NVGesture dataset, we utilize the Acc and Macro F1-score (Mac-F1) following~\cite{cai2019multi, yu2019adapting}. The Acc measures the proportion of concordance between predicted outcomes and true labels. The Mac-F1 computes the average of F1 scores for each category, while the mAP calculates the mean of average precision for each category. 

\noindent\textbf{Implementation Details.} 
In all our experiments, we use raw data as input. Following~\cite{peng2022balanced,fan2023pmr}, for the Kinetics-Sound and CREMA-D datasets, we use ResNet18~\cite{he2016deep} as the backbone for both modalities. In detail, for the video modality, we extract 10 frames from video clips and uniformly sample 3 frames as the input. The input channels are changed from 3 to 1, as demonstrated in~\cite{chen2020vggsound}. For the audio modality, we convert the data into spectrograms with a size of 257$\times$1004 for Kinetics-Sound and 257x299 for CREMA-D, using the librosa~\cite{mcfee2015librosa}. For the backbone of the text-image datasets, we employ ResNet50 and BERT~\cite{kenton2019bert} for the image and text modality, respectively. We crop the image data to a size of 224$\times$224 and set the maximum sequence length for text data to 128. We use stochastic gradient descent (SGD) as the optimizer for the audio-video and NVGesture datasets, with a momentum of 0.9 and weight decay of 1e-4. The initial learning rate is set to 1e-2, and when the loss is saturated, it is multiplied by 0.1. For the text-image dataset, following~\cite{cai2019multi, yu2019adapting}, we use Adam as the optimizer, with an initial learning rate of 1e-5. In the context of the audio-video datasets, the scaling factor $\tau$ is configured to be 0.25, while on the text-image and NVGesture datasets, the $\tau$ is set to 0.5. Besides, For the NVGesture dataset, we follow the data preparation steps outlined in~\cite{wu2022characterizing} and employ the I3D~\cite{CarreiraZ17} as uni-modal branches. We train all models on a single RTX 4090 GPU. 
The selection scope of subnetworks $\mathcal{S}^{(k)}$ varies depending on different fusion methods, as the parameters corresponding to each modality, i.e. ${\bf w}^{(k)}$, vary under different fusion methods. Therefore, below, we provide details of the parameter composition for each modality \({\bf w}^{(k)}\), under different fusion strategies. In the framework of late fusion, each modality subnetwork encompasses both its classier and encoder, while in other fusion scenarios, each modality subnetwork only considers its encoder.
\begin{table}[h]
    \centering
    \caption{The performance on NVGesture dataset. The involved modalities are RGB, OF, and Depth. Baseline means prediction sum fusion with no gradient modulation strategy. The best results are highlighted in \textbf{bold} and the \underline{underscore} symbol represents the second best performance.} 
    \label{tab:three-modal}
    \normalsize
    \begin{tabular}{l|cc|cc}
    \Xcline{1-5}{0.7pt}
        \multicolumn{1}{l|}{\multirow{3}{*}{Methods}} &
        \multicolumn{2}{c|}{{NVGesture}}&
          \multicolumn{2}{c}{{NVGesture}} \\ 
          ~&\multicolumn{2}{c|}{{scratch}} &
            \multicolumn{2}{c}{{pretrain}} \\
        & ACC  & Mac-F1 & ACC  & Mac-F1  \\ \hline
        RGB & 68.88 & 69.05  & 78.22  & 78.33   \\ 
        OF & 64.11 & 64.34  & 78.63    & 78.65    \\ 
        Depth & 80.50 &	80.41 & 81.54    & 81.83  \\ \hline
        Baseline & 78.63 &	78.91 &  82.57    & 82.68    \\
        MSES & 79.46 & 79.48 & 81.12 & 81.47 \\
        ORG-GB & {81.95} & {82.07}  &  \underline{82.99}   & {83.05}  \\
        MSLR& 81.54& 81.38  & 82.37 & 82.39  \\
        AGM& 80.71 & 81.18   & 82.78  & 82.84  \\ 
        AMSS & \underline{82.57} & \underline{82.65} & \underline{82.99} & \underline{83.08}  \\ 
        AMSS+ & \textbf{84.64} & \textbf{84.94} &  \textbf{83.20}   & \textbf{83.25} \\ \Xcline{1-5}{0.7pt}
    \end{tabular} 
    \vspace{-5pt}
\end{table}

\begin{table*}[!htbp]
    \centering
    \caption{The backbone of the network is transformer-based (MBT). Comparing with other imbalanced multi-modal learning methods. The best results are highlighted in \textbf{bold}. $\downarrow$ indicates a performance decrease compared to the baseline of the MBT model.}
    \vspace{-5pt}
    \label{tab:backbone}
    \normalsize
    \begin{tabular}{l|cc|cc|cc|cc}
    \Xcline{1-9}{0.7pt}
        {\multirow{3}{*}{Methods}} &
          \multicolumn{2}{c|}{{Kinetics-Sound}} &
          \multicolumn{2}{c|}{{CREMA-D}} &
          \multicolumn{2}{c|}{{Kinetics-Sound}} &
          \multicolumn{2}{c}{{CREMA-D}} \\
        ~ & \multicolumn{2}{c|}{{scratch}} &
            \multicolumn{2}{c|}{{scratch}} &
            \multicolumn{2}{c|}{{pretrain}} &
            \multicolumn{2}{c}{{pretrain}} \\
        ~ & ACC & mAP & ACC & mAP & ACC & mAP & ACC & mAP \\ \hline
        MBT & 58.52  & 62.32  & 54.17  & 55.26 & 79.03  & 85.71  & 78.63  & 87.44  \\
        MSES & 59.22 &	63.58 &	54.44 &	58.47 & 79.67 &	86.50 	&78.36 ($\downarrow$) & 87.41 ($\downarrow$) \\
        OGR-GB & 59.18  & 62.27 ($\downarrow$)  & 54.70  & 57.16  & 79.67  & 85.00 ($\downarrow$)  & 79.03  & 87.74   \\ 
        OGM-GE & 57.67 ($\downarrow$) &62.24 ($\downarrow$)  & 54.03 ($\downarrow$) & 54.94 ($\downarrow$) & 78.59 ($\downarrow$)  & 85.78  & 78.23 ($\downarrow$) & 87.63  \\ 
        DOMFN & 58.68  & 62.82  & 53.63 ($\downarrow$)  & 54.70 ($\downarrow$)  &79.40  & 85.81  & 79.44  & 87.49  \\ 
        MSLR& 59.72 &	63.71 &	54.30 &	58.63 &	79.47 &	86.53 & 78.76 &	87.95 \\
        PMR & 58.06 ($\downarrow$)  & 61.71 ($\downarrow$)  & 53.36 ($\downarrow$) & 55.78   &78.78 ($\downarrow$)  & 85.23 ($\downarrow$)  & 78.76 & 87.41 ($\downarrow$)  \\     
        AGM & \textbf{60.34}  & 63.61  & 54.84  & 55.15 ($\downarrow$)&79.36  & 85.56 ($\downarrow$) &79.30 & 87.97  \\
        AMSS & 59.37  & 63.53  & 55.38  & 57.08& 79.51  & 85.94   & 79.44  & 87.95
 \\ 
        AMSS+ &  60.03 & \textbf{64.28}  & \textbf{56.18}  & \textbf{59.51} &  \textbf{80.09}  & \textbf{86.25} & \textbf{79.57} & \textbf{88.10}  
\\ \Xcline{1-9}{0.7pt}
        \end{tabular}
\end{table*}

\vspace{-6pt}
\subsection{Comparison with  Multi-Momal Learning Methods}
To substantiate the advantages of AMSS and AMSS+, we conduct a comprehensive comparison, considering both modal rebalancing methods and traditional fusion approaches. Moreover, we identify limitations in the model architecture of Channel and Greedy constraints, rendering them unsuitable for the transformer structure. Additionally, to ensure experimental fairness, we evaluate AMSS, AMSS+, and all comparative methods using the same backbone, and use Concat for all multi-modal rebalance fusion methods to ensure a unified fusion strategy. We employ experiments on NVGesture from scratch (scratch) and with pre-training (pretrain).
Taking into account the limitations posed by specific comparative methods in scenarios of two modalities, we undertake comparisons on the NVGesture dataset, encompassing MESE, ORG-GB, MSLR, and AGM methods.

The results for both audio-video datasets and text-image datasets are presented in Table~\ref{tab:Sota_compare}, while results of the NVGesture dataset are depicted in Table~\ref{tab:three-modal}. From the results, we draw the following observations: (1) On both the Twitter-15 and NVGesture datasets, we observe the phenomenon where the best uni-modal performance surpasses that of multi-modal joint learning. Besides, on other datasets, fusion methods without rebalancing exhibit limited improvement compared to the best uni-modal performance, especially on the CREMA-D and Sarcasm-Detection datasets. This limitation stems from the challenge of modality imbalance. (2) All modal rebalancing methods exhibit substantial enhancements compared to the traditional feature concatenation fusion. This observation not only highlights the influence of the imbalance phenomenon on performance but also substantiates the effectiveness of the modal rebalance strategy. (3) It is evident that AMSS/AMSS+ consistently achieves superior performance across all metrics compared to other comparison methods. We observe a significant improvement in the performance of AMSS+ on Kinetics-Sound/CREMA-D. After modulating, our method achieves a performance improvement of 5.15\%/2.96\% and 7.70\%/6.99\% in the Accuracy metric compared to the second-best approach and Concat. (4) Differing from modal rebalancing methods restricted to scenarios with only two modalities, such as OGM-GE and Greedy, our approach can address challenges in scenarios involving more than two modalities. In the evaluation of the NVGesture dataset, AMSS+ consistently achieves the best performance compared to other methods designed for multiple modalities. It is worth noting that, unlike other methods, the effectiveness of the AMSS+ in training from scratch is even better than pre-training. This observation confirms the robustness and effectiveness of our proposed method. (5) Compared to the biased estimation approach of AMSS, AMSS+ with unbiased estimation demonstrates superior performance in most scenarios, especially on audio-video datasets. This improvement indicates that AMSS+ leads to more accurate parameter estimation and superior performance. The consistency between experimental results and theoretical predictions bolsters the credibility of our unbiased estimation strategy.

\begin{table*}[!htbp]
    \centering
    \caption{Various fusion methods combined with AMSS. $\dagger$~and~$\ddagger$~indicates that AMSS abd AMSS+ has been applied, respectively. } 
    \vspace{-5pt}
    \label{tab:various_method}
    \normalsize
    
    \begin{tabular}{l|cc|cc|cc|cc}
    \Xcline{1-9}{0.7pt}
        \multicolumn{1}{c|}{\multirow{2}{*}{Methods}} &
          \multicolumn{2}{c|}{{Kinetics-Sound}} &
          \multicolumn{2}{c|}{{CREMA-D}} &
          \multicolumn{2}{c|}{{Sarcasm-Detection}} &
          \multicolumn{2}{c}{{Twitter-15}} \\ 
          & ACC & mAP & ACC & mAP & ACC & Mac-F1 & ACC & Mac-F1  \\ \hline
        Affine & 64.24  & 69.31  & 66.26  & 71.93  & 82.40  & 81.88  & 72.03  & 59.92  \\ 
        Affine$\dagger$ & 65.02  & 71.60  & 66.94  & 71.38  & 83.31  & 82.70  & 72.81  & 65.42  \\ 
        Affine$\ddagger$ & \textbf{69.08}  & \textbf{74.88}  & \textbf{69.76}  & \textbf{78.15}  & \textbf{83.40}  & \textbf{82.74}  & \textbf{73.10}  & \textbf{67.06}  \\ \hline
        Channel & 63.51  & 68.66  & 66.13  & 71.70  & - & - & - & - \\ 
        Channel$\dagger$   & 65.71  & 72.03  & 67.74  & \textbf{76.91}  & - & - & - & - \\
        Channel$\ddagger$ & \textbf{68.69}  & \textbf{75.42}  & \textbf{69.89}  & 75.67  & - & - & - & - \\ \hline
        ML-LSTM & 63.94 & 69.02 & 62.90 & 64.73 &82.77 & 82.05& 	70.68 &	65.64 \\
        ML-LSTM $\dagger$ & 67.80 &	73.24 &	65.05 &	71.59 & 83.44&  82.80 &73.67 &	67.77  \\
        ML-LSTM $\ddagger$& \textbf{70.70} &	\textbf{77.00} &	\textbf{67.47} &	\textbf{73.50} & \textbf{83.89} &	\textbf{83.12} &	\textbf{74.45} & \textbf{69.41}   \\ \hline
        Sum & 64.90  & 71.03  & 63.44  & 69.08  & 82.94  & 82.47  & 73.00  & 66.61  \\ 
        Sum$\dagger$  & 66.52  & 72.98  & 66.80  & 73.14  & \textbf{83.73}  & 83.07  & \textbf{73.87}  & \textbf{66.84}  \\
        Sum$\ddagger$ & \textbf{69.93}  & \textbf{76.26}  & \textbf{69.49 } & \textbf{74.80}  & 83.69  & \textbf{83.14}  & 73.38  & 66.55  \\ \midrule
        Weight & 65.33  & 71.10  & 66.53  & 73.26  & 82.65  & 82.19  & 72.42  & 65.16  \\ 
        Weight$\dagger$  & 66.64  & 72.88  & 68.41  & \textbf{77.22}  & 83.23  & 82.59  & 73.48  & 67.40  \\ 
        Weight$\ddagger$ & \textbf{68.46}  & \textbf{74.62}  & \textbf{71.10 } & 76.73  & \textbf{83.98}  & \textbf{83.42}  & \textbf{74.35}  & \textbf{69.52} \\ 
        \Xcline{1-9}{0.7pt}
    \end{tabular}
\end{table*}
\noindent\textbf{Comparision in Complex Transformer-based Architecture.} Currently, numerous works in multi-modal learning are built upon a unified multi-modal transformer architecture. To assess the effectiveness of the AMSS/AMSS+ method within this framework, we investigate alternative backbone networks. While handling audio-video datasets, in addition to utilizing CNN as the backbone, we introduce a fusion architecture based on Transformer, namely MBT~\cite{NagraniYAJSS21}. This methodology comprises cross-modal interaction layers utilizing bottlenecks to integrate information between modalities. We employ two training approaches: training from scratch (scratch) and pre-training using ImageNet/Audioset for the ViT/AST models (pretrain) in MBT. 
Based on the results shown in Table~\ref{tab:backbone}, the following conclusions can be observed. (1) The effectiveness of modality imbalance methods on this architecture is limited compared to CNN architecture. In complex cross-modal interaction scenarios, certain modality imbalance methods prove ineffective. For example, OGM-GE and PMR, regardless of whether they are employed in a from-scratch training or pre-training setting, exhibit performance even worse than the benchmark results of MBT. (2) Whether employing a CNN architecture or a sophisticated multi-modal Transformer architecture, the AMSS+ strategy consistently maintains superior performance across almost all metrics. This demonstrates that our method possesses excellent adaptability. (3) Whether the model is pre-training or not does not impact the performance of our method. This flexibility allows our approach to be applied seamlessly across various scenarios. (4) AMSS+ continues to outperform AMSS, aligning once again with our theoretical expectations. 


\noindent\textbf{Exploration on Different Fusion Strategy.} 
In the aforementioned context, we noted that traditional fusion methods are affected by the challenge of modality imbalance compared to multi-modal rebalancing strategies. The problem of modality imbalance constrains the performance of these methods. Therefore, we investigate the efficacy of integrating AMSS/AMSS+ with various fusion techniques to tackle the challenge of modality imbalance under different fusion strategies. 
The AMSS/AMSS+ strategy is applied to five fusion methods: Affine, Channel, ML-LSTM, Sum, and Weight, all of which have been introduced previously. It is noteworthy that Sum and Weight are prediction-level fusion methods, Channel is hybrid fusion method, while the others operate at the feature level. Fusion strategies at different levels correspond to different methods for the selection scope of subnetworks for each modal parameter. The relevant content is introduced in implementation details. As shown in Table~\ref{tab:various_method}, the combination of AMSS/AMSS+ with either feature-level or prediction-level fusion methods reveals a significant enhancement in their performance, underscoring the effectiveness of the AMSS strategy in augmenting their capabilities and mitigating the problem of modality imbalance across diverse fusion strategies. Compared to AMSS, the rebalancing strategy of AMSS+ demonstrates superior performance improvement, except for the Sum strategy on the text-image dataset.
\begin{table*}[tbp]
    \centering
    \caption{Performance of diverse sampling strategies. Baseline means no subnetwork optimization strategy. The best results are highlighted in \textbf{bold}. 
    } \vspace{-5pt}
    \label{tab:sample}
    \normalsize
    \begin{tabular}{l|cc|cc|cc|cc}
    \Xcline{1-9}{0.7pt}
        \multicolumn{1}{l|}{\multirow{2}{*}{Methods}} &
          \multicolumn{2}{c|}{{Kinetics-Sound}} &
          \multicolumn{2}{c|}{{CREMA-D}} &
          \multicolumn{2}{c|}{{Sarcasm-Detection}} &
          \multicolumn{2}{c}{{Twitter-15}} \\
        & ACC & mAP & ACC & mAP & ACC & Mac-F1 & ACC & Mac-F1  \\ \hline
        Baseline & 64.55  & 71.30  & 63.31  & 68.41  & 82.86  & 82.40  & 70.11  & 63.86  \\ 
        Random & 66.45  & 72.43  & 65.59  & 72.05  & 83.60  & 83.00  & 74.35  & 68.87    \\ 
        AMSS & 68.96  & 74.89  & 67.61  & 73.97  & 84.14  & 83.69  & \textbf{75.89}  & \textbf{69.81}   \\ 
        AMSS+ & \textbf{72.25}  & \textbf{79.13}  & \textbf{70.16}  & \textbf{76.14}  & \textbf{84.35}  & \textbf{83.77}  & 75.12  & 69.23  \\ \Xcline{1-9}{0.7pt}
    \end{tabular}\vspace{-8pt}
\end{table*}
\subsection{Ablation Study}
\subsubsection{Sampling Mechanism Variant}
In this section, our objective is to delve into the significance of multi-modal subnetworks derived from non-uniform adaptive sampling techniques. To achieve this, we employ the uniform sampling mechanism (Random) to conduct ablation studies for AMSS+. In the Random method,  similar to that in preliminary experiments, each mask unit is sampled with an identical probability. The difference lies in how the update ratios $\rho$ for various modalities depend on the dynamic update strategy of AMSS+. As shown in Table \ref{tab:sample}, the Random method selects a multi-modal subnetwork that exhibits superior performance compared to the Baseline, highlighting the effectiveness of updating each modality with distinct update ratios depending on AMSS+. Across both biased and unbiased estimation scenarios, the utilization of non-uniform adaptive sampling showcases significant superiority over uniform sampling in performance metrics.
Based on these observations, we can confidently conclude that AMSS+ with non-uniform adaptive sampling is indeed more effective for downstream tasks.

\begin{figure}[tbp]
  \centering
  \includegraphics[width=0.244\textwidth]{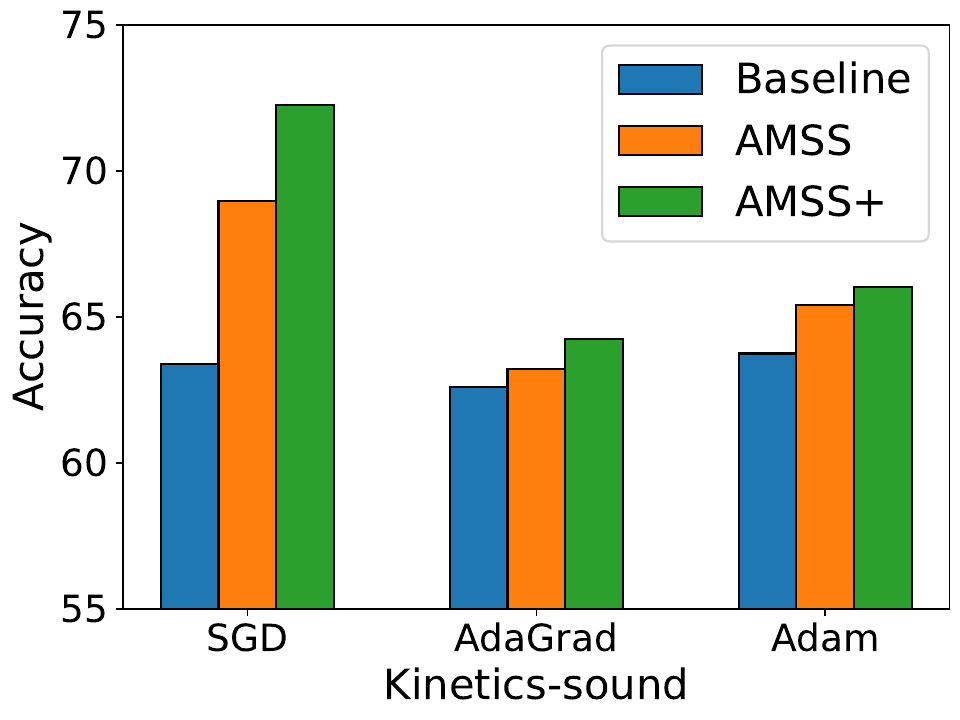}\hfill
  \includegraphics[width=0.244\textwidth]{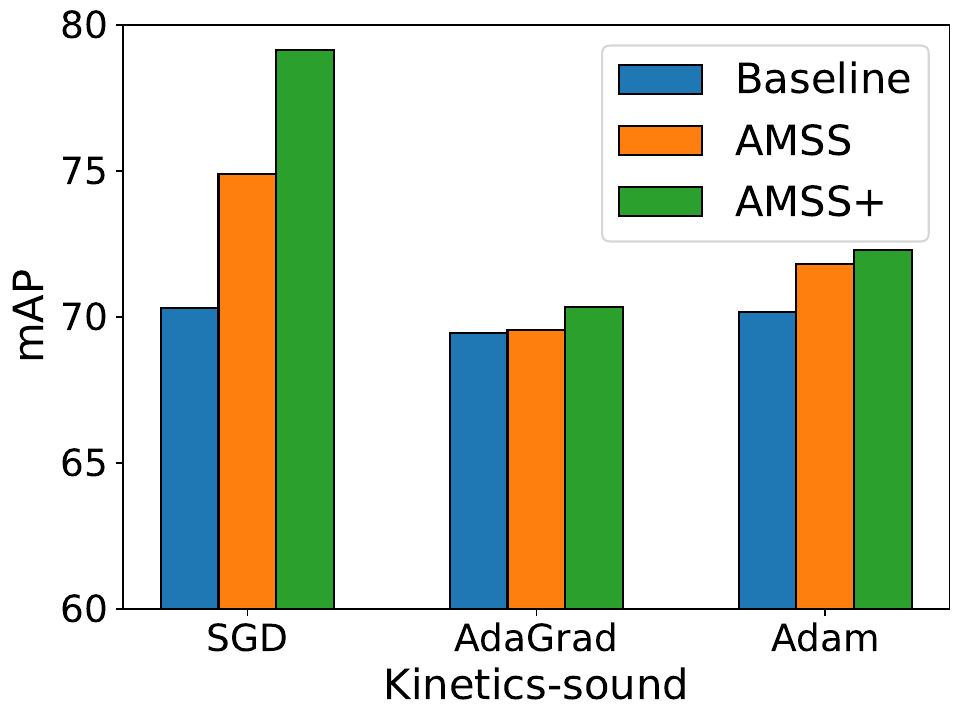}\hfill
  \includegraphics[width=0.244\textwidth]{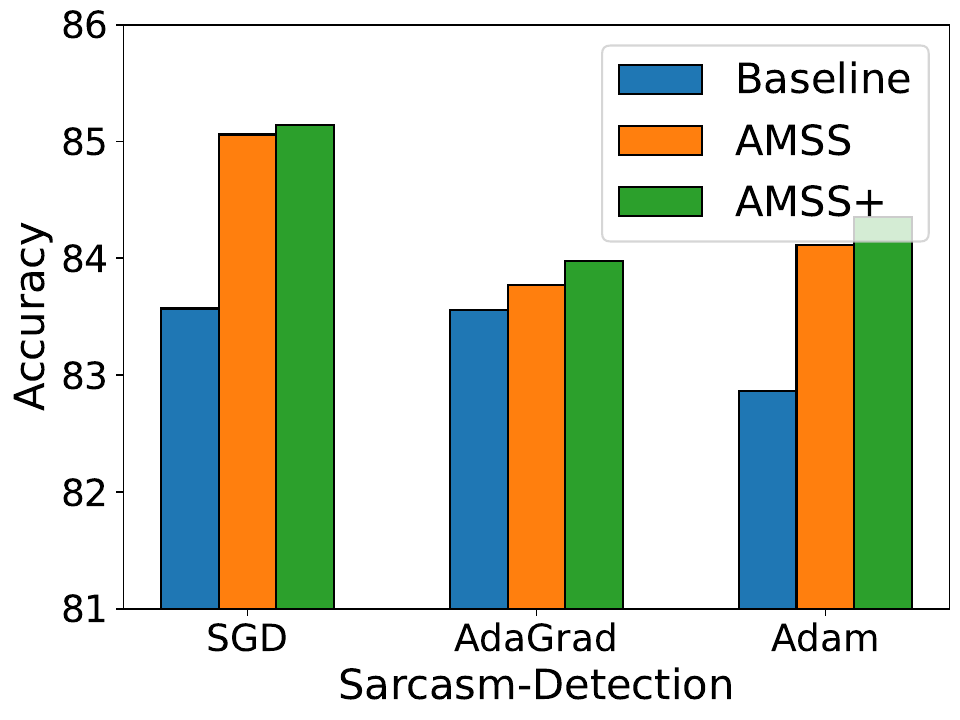}\hfill
  \includegraphics[width=0.244\textwidth]{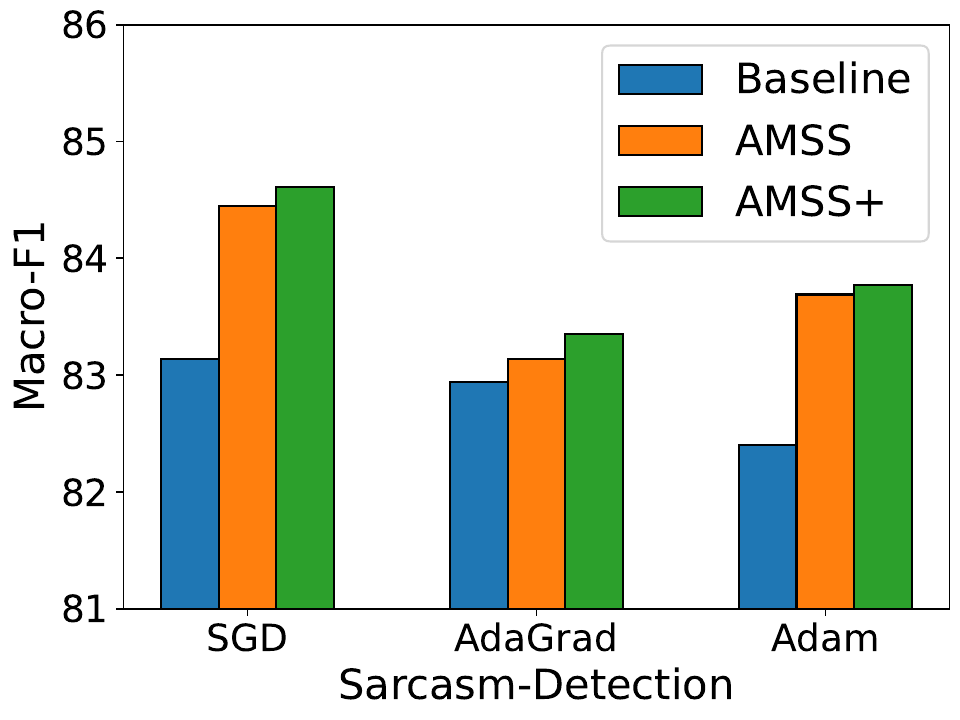}
  \caption{Experiments with SGD, AdaGrad and Adam optimizers in Kinetics-Sound and Sarcasm-Detection. Baseline means no extra modulation.}\vspace{-5pt}
  \label{fig:optimizer}
\end{figure}


\begin{table}[tbp]
    \centering
    \caption{The effectiveness of each component.}\vspace{-10pt}
    \label{tab:component}
    \Large
    \begin{center}
    \resizebox{1\columnwidth}{!}{
    
        \begin{tabular}{l|cc|cc}
            \Xcline{1-5}{0.7pt}
            \multicolumn{1}{c|}{\multirow{2}{*}{Methods}} &
          \multicolumn{2}{c|}{{Kinetics-Sound}} &
          \multicolumn{2}{c}{{Sarcasm-Detection}} \\
            & ACC & mAP & ACC & Mac-F1 \\ \hline
            Baseline & 64.55  & 71.30  & 82.86  & 82.40  \\
            ~~w/ Classifier Mask & 65.37  & 70.87  & 83.40  & 83.05  \\ 
            ~~w/ Backbone Mask& 66.68  & 72.23  & 83.81  & 83.17  \\ 
            ~~AMSS & \textbf{68.96}  & \textbf{74.89}  & \textbf{84.14}  & \textbf{83.69}  \\ \hline
            ~~w/ Classifier Mask+ & 66.80  & 73.80  & 83.44  & 82.93  \\ 
            ~~w/ Backbone Mask+ & 69.15  & 76.13  & 83.77  & 83.10  \\ 
            ~~AMSS+ & \textbf{72.25}  & \textbf{79.13}  & \textbf{84.35}  & \textbf{83.77} \\ 
            \Xcline{1-5}{0.7pt}
        \end{tabular} 
        }
    \end{center}\vspace{-10pt}
\end{table}

\subsubsection{Different Optimizer}
Our theoretical analysis is grounded in the SGD optimizer. To further substantiate the adaptability of our approach when integrated with a variety of optimizers, AdaGrad~\cite{duchi2011adaptive} and Adam~\cite{kingma2014adam} optimizers are also employed in our experimental validation. These optimizers are applied to two distinct types of datasets to ensure a comprehensive evaluation of our approach's performance. Subsequently, we integrate the AMSS or AMSS+ strategy and assess its performance across different optimizers. By incorporating AMSS or AMSS+, we aim to enhance the performance of our method across a spectrum of optimization strategies. The results, depicted in Figure~\ref{fig:optimizer}, emphasize the diverse performance exhibited by the selected optimizers on Kinetics-Sound and Sarcasm-Detection datasets. Significantly, our method consistently showcases exceptional adaptability, consistently surpassing baseline results and achieving substantial performance improvements. 
Furthermore, compared to other optimizers, the improvement in the effectiveness of our method is more pronounced under the SGD optimizer. This is attributed to the adaptive adjustment of learning parameters inherent to the optimizer itself.
This sustained success across different optimizers underscores the adaptability of our approach in optimizing model performance, irrespective of the underlying optimizer. 
\begin{figure}[tbp]
  \centering
  \includegraphics[width=0.48\textwidth]{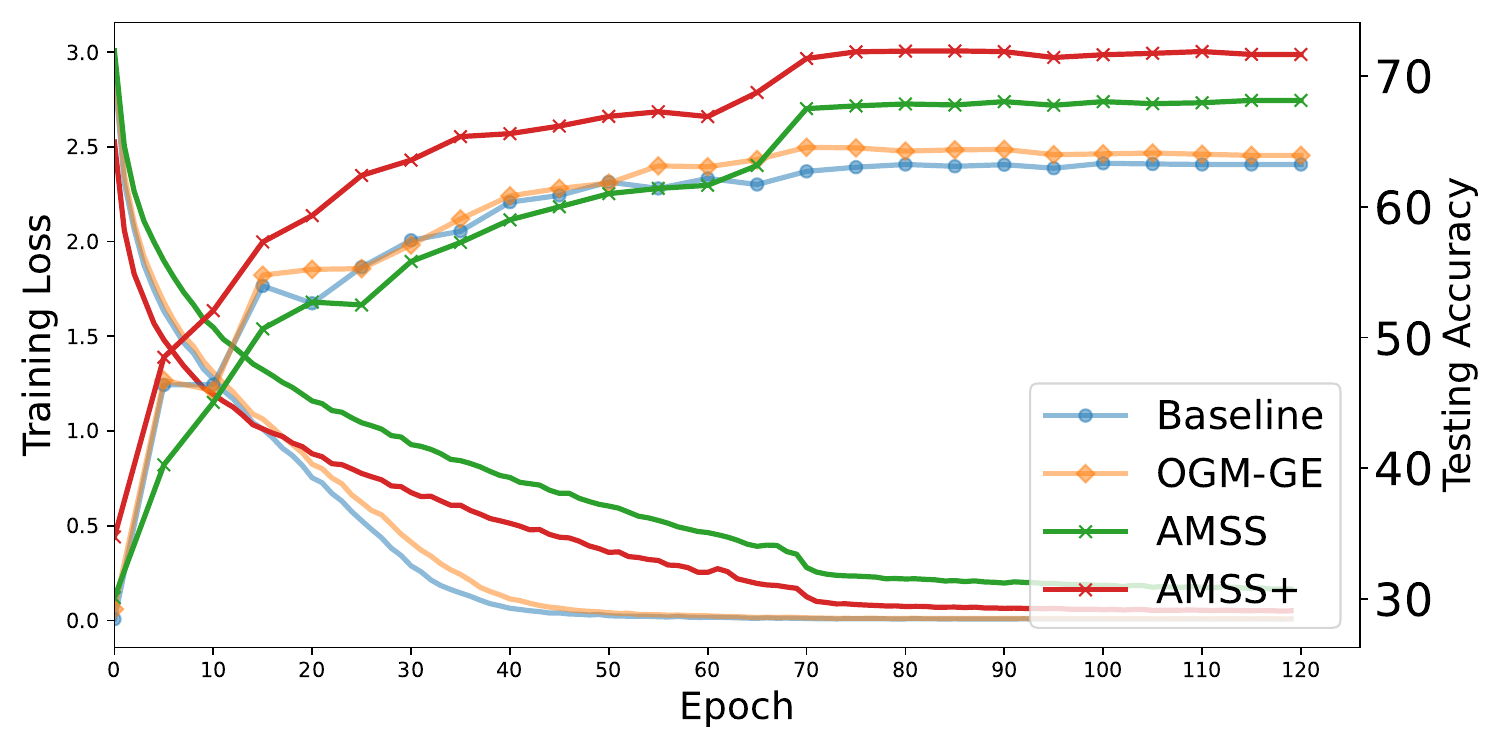}\hfill
  \vspace{-5pt}
  \caption{On the Kinetics-Sound dataset, we employ the concatenation fusion method for the joint training of multi-modal models, encompassing Baseline, OGM-GE, AMSS, and AMSS+. We investigate the changes in training loss and evaluate the variations in test performance across these multi-modal models. Baseline means no gradient modulation strategy.}\vspace{-15pt}
  \label{fig:convergence}
\end{figure}

\begin{figure*}[tbp]
\centering
  \subfloat[Kinetics-Sound]{
  \includegraphics[width=0.24\textwidth]{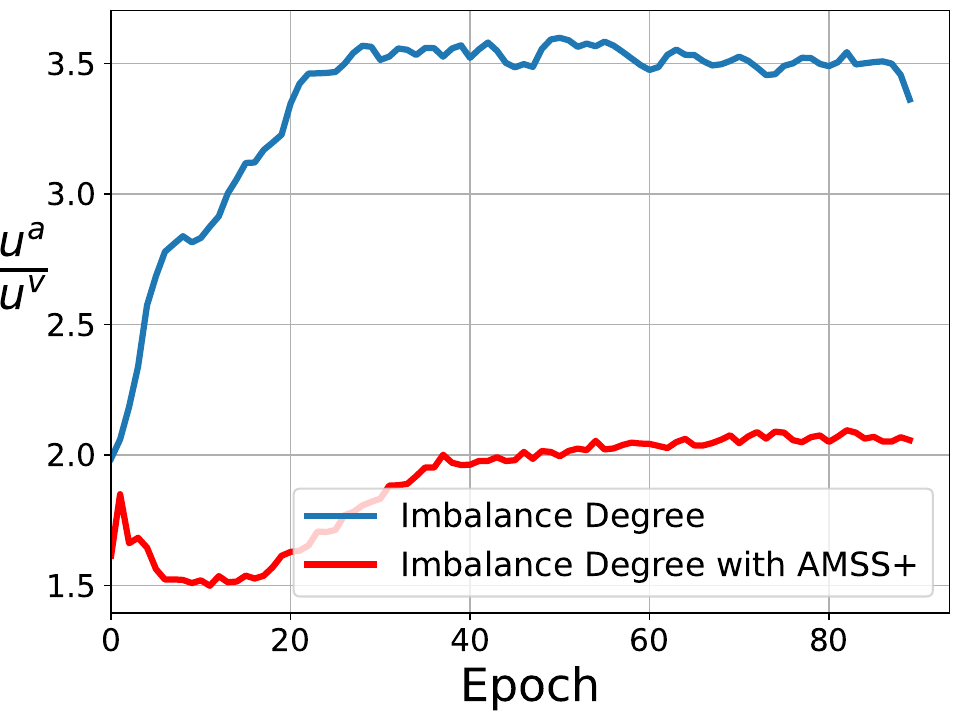}\hfill
  \includegraphics[width=0.24\textwidth]{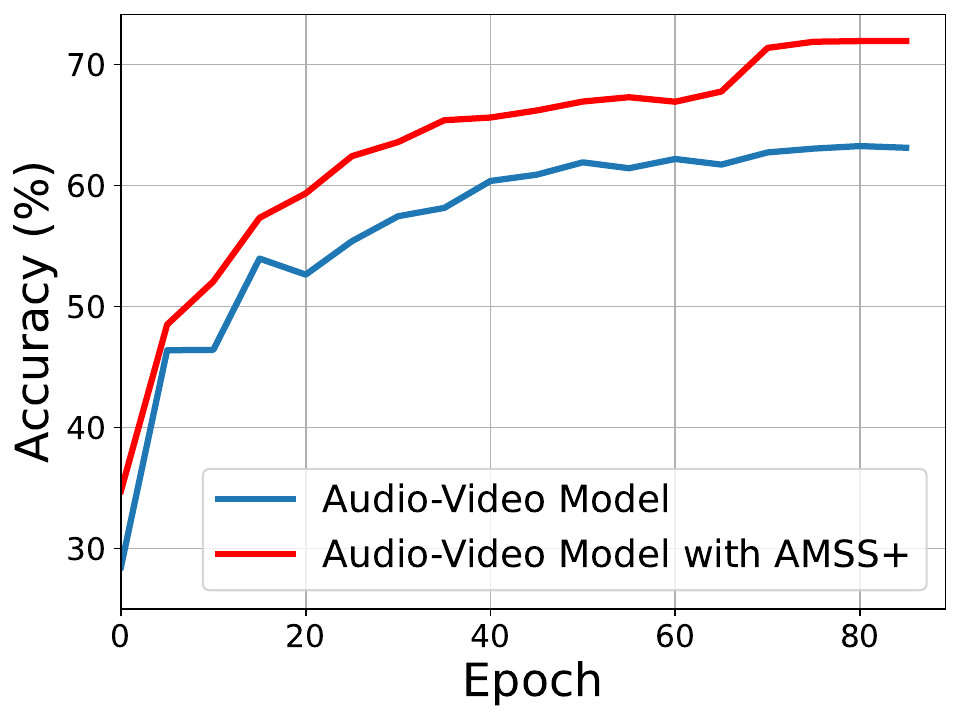}\hfill
  \includegraphics[width=0.24\textwidth]{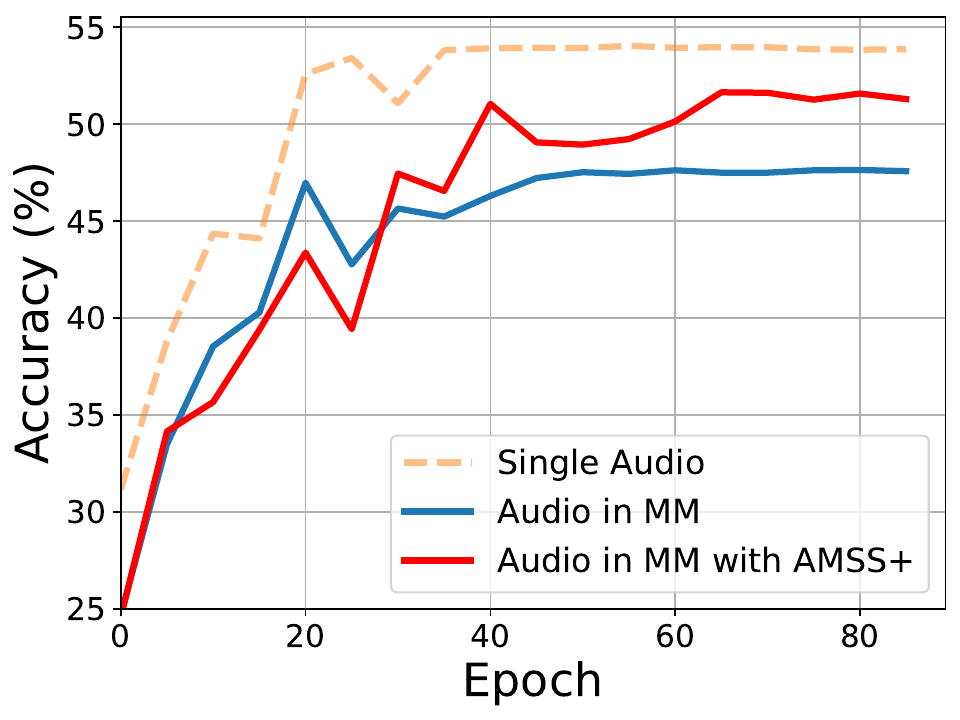}\hfill
  \includegraphics[width=0.24\textwidth]{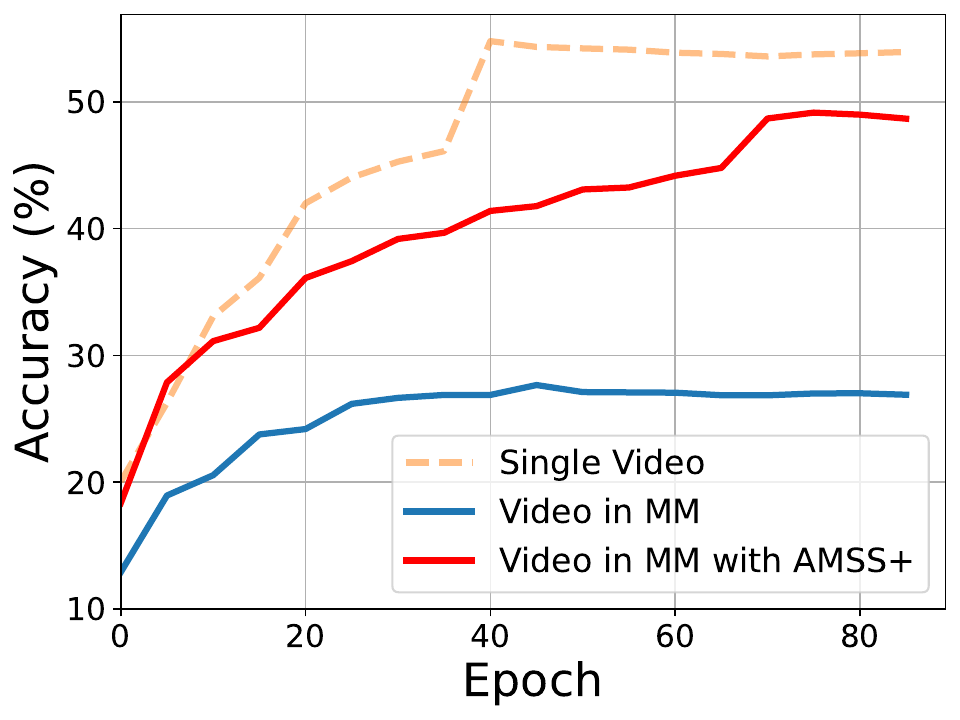}\hfill
  } \\
  \subfloat[Sarcasm-Detection]{
  \includegraphics[width=0.24\textwidth]{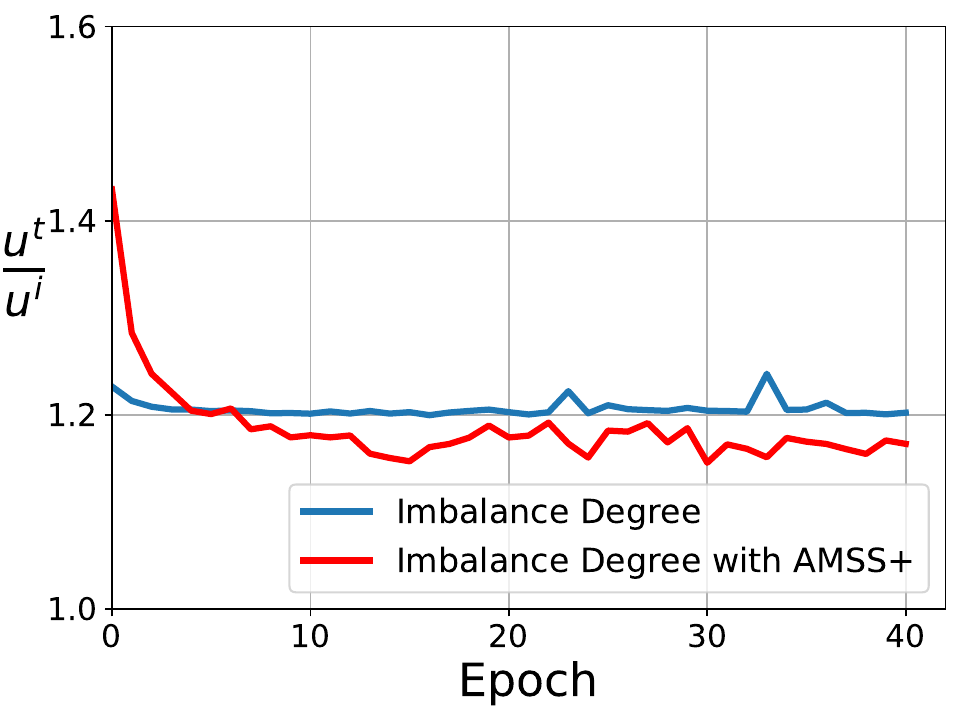}\hfill
  \includegraphics[width=0.24\textwidth]{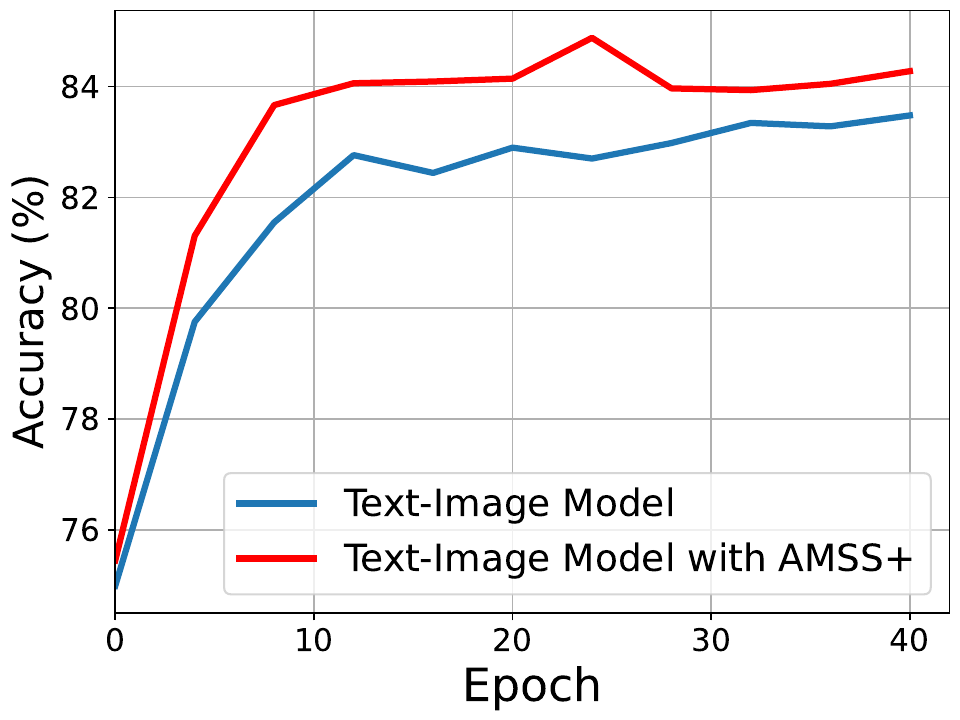}\hfill
  \includegraphics[width=0.24\textwidth]{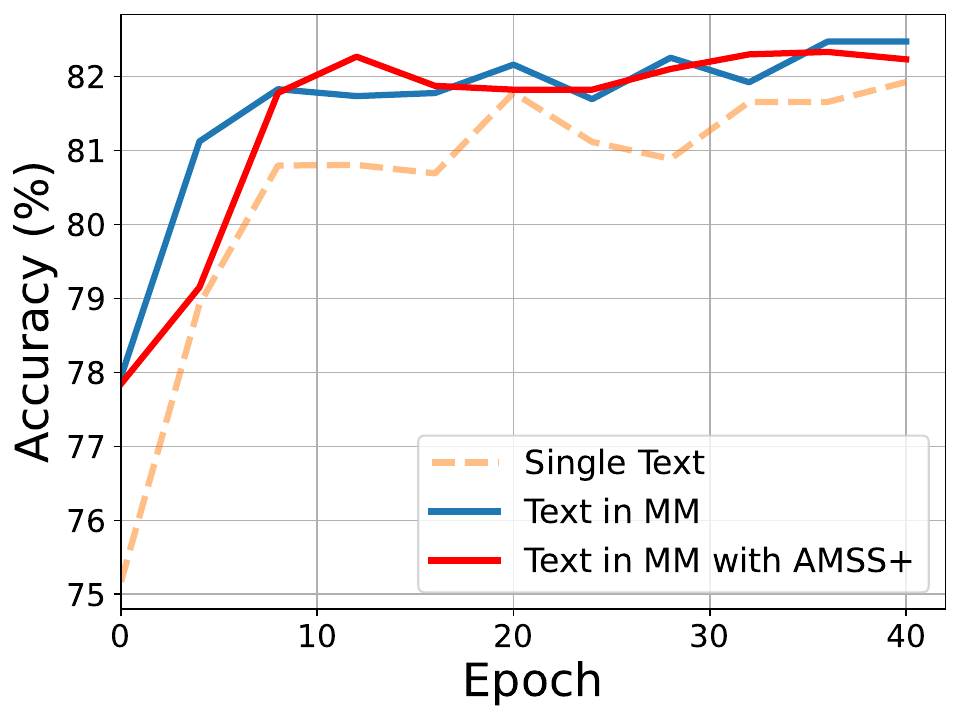}\hfill
  \includegraphics[width=0.24\textwidth]{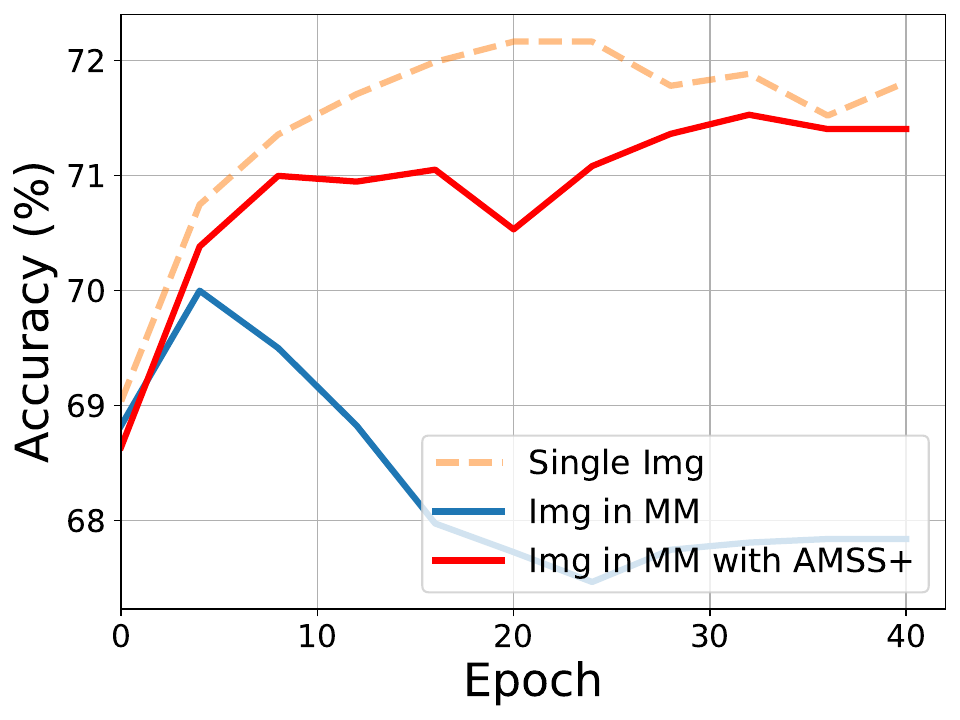}\hfill
  }
\vspace{-5pt}
  \caption{Analysis of Modality Imbalance Problem. Each dataset is represented in Figures from left to right, depicting the variation in model imbalance degree, the comparison between our method and the Baseline model, the performance of single-modal branches in multi-modal trained models, and the performance of single-modal branches with AMSS+, including Audio/Text and Video/Img modalities. The fusion method used in the multi-modal model is Concat.} 
  \label{fig:imbalance}
  \vspace{-10pt}
\end{figure*}
\subsubsection{Component Ablation Analysis}
In both fusion methods at the prediction level and the feature concatenation of a single fully connected layer (considered decoupled)~\cite{peng2022balanced}, we implement the adaptive subnetwork masking strategy independently for the classifier and the backbone network. To assess the significance of each module within the model, we perform an extensive module ablation analysis on two different types of datasets. Table~\ref{tab:component} illustrates the impact of each module on our method. We present four model variants for examination: Baselines; Model with classifier masks/masks+, excluding backbone masks/masks+; Model with backbone masks/masks+, excluding classifier masks/masks+; The proposed AMSS/AMSS+. In our analysis, each module integrated into our method consistently demonstrates a contribution to the overall performance enhancement of the model. Compared to applying the masking strategy to the classifier, adopting the masking strategy for the backbone network leads to a more pronounced improvement in the model's performance.


\subsection{Optimization Analysis}
In our research, we scrutinize the impact of diverse parameter update strategies on model training, concentrating on a comparative study of four distinct methods employed for multi-modal joint learning in the Kinetics-Sound dataset. 
Figure~\ref{fig:convergence} demonstrates that the implementation of Mask Subnetwork (AMSS, AMSS+) strategies appears to moderate the pace at which the model learns, in contrast to the baseline model.
From the theoretical perspective, subnetwork optimization strategies are expected to moderately slow the model's rate of convergence, a trend that is consistent with the trajectory of training loss depicted in  Figure~\ref{fig:convergence}. 
It is noteworthy that despite this deceleration, such strategies maintain robust performance levels and demonstrate superior generalization abilities. In contrast, while other methods may exhibit a faster modal convergence rate, they evidently succumb to model overfitting issues, lacking generalization abilities. 
Additionally, the AMSS+ strategy, which provides an unbiased approach to subnetwork optimization, surpasses its biased counterpart, showing improved convergence rates and overall performance. In conclusion, the adoption of adaptive strategies for updating subnetworks offers tangible benefits.
\begin{table*}
\caption{
Accuracy and gradient update ratios across two modalities under varying \(\tau\). }
\vspace{-5pt}
\label{tab:parameter}
\centering
\renewcommand{\arraystretch}{1.5}
\centering
\begin{tabular}{c|cccccccccc|ccc}
\Xcline{1-14}{0.7pt}
    \multirow{2}{*}{AMSS+} & \multicolumn{10}{c|}{\multirow{1}{*}{$\tau<1$}} & \multicolumn{3}{c}{\multirow{1}{*}{$\tau\geq1$}} \\
    & 0.10 & 0.20 & 0.25 & 0.30 & 0.40 & 0.50 & 0.60 & 0.70 & 0.80 & 0.90 & 1.00 & 2.00 & 4.00 \\ \hline
    ACC        & 72.17 & \textbf{72.52} & 72.25 & 72.13 & 71.82 & 71.47 & 71.67 & 71.05 & 71.01 & 70.85 & 70.20 & 69.85 & 69.69 \\ 
    {$\overline{\rho}^{(1)}$} & 0.2238 & 0.2436 & 0.2479 & 0.2508 & 0.2684 & 0.2774 & 0.2914 & 0.2940 & 0.3105 & 0.3156 & 0.3305 & 0.3964 & 0.4404 \\ 
    {$\overline{\rho}^{(2)}$} & 0.7762 & 0.7564 & 0.7521 & 0.7492 & 0.7316 & 0.7226 & 0.7086 & 0.7060 & 0.6895 & 0.6844 & 0.6695 & 0.6036 & 0.5596 \\ 
\Xcline{1-14}{0.7pt}
\end{tabular}\vspace{-12pt}
\end{table*}
\vspace{-5pt}
\subsection{Analysis of Modality Imbalance}
The issue of modality imbalance highlights a challenge in the process of multi-modal learning, where the dominance of one modality inhibits the exploration of distinctive features in other modalities. To further scrutinize the impact of our approach on model optimization, we initially define the degree of modality imbalance, denoted as $\frac{u^{(1)}}{u^{(2)}}$, utilizing the modal significance derived from Equation~\ref{con:contribution_u}. We use $\frac{u^{a}}{u^{v}}$ and $\frac{u^{t}}{u^{i}}$ to represent the imbalance degree of the audio-video and text-image modalities, respectively. Subsequently, we analyze the performance of the multi-modal model both before and after the application of the AMSS+ strategy, along with an examination of the performance of uni-modal branches within the multi-modal model. 
The observations and conclusions from the imbalance analysis are presented in Figure~\ref{fig:imbalance}. Due to space limitations, the results of CREMA-D and Twitter-15 datasets are presented in the Appendix. (1) In vanilla multi-modal learning, the Audio/Text branch closely approximates the performance of the single Audio/Text model, while the Video/Img branch exhibits minimal effective learning, resulting in a noticeable gap compared to the single Video/Img modality. As depicted in the curve of imbalance degree variations, our method effectively alleviates the modality imbalance to varying degrees across the two datasets. This serves as concrete evidence of the efficacy of our approach. (2) The modality imbalance issue is most severe in the Kinetics-Sound, leading to under-utilization of the video modality in joint training. The performance of the multi-modal model is primarily derived from the Audio modality branch. The AMSS+ strategy allows for the full utilization of the Video modality, bringing its performance close to single-modal training. 
(3) On the text-img dataset, where the modality imbalance issue is less severe than in audio-video, the model is still able to address this problem, leading to improvements in the performance of non-dominant modalities (image) and ultimately achieving better overall model performance. 


  
  

\subsection{Exploring the Gradient Update Ratio}
AMSS/AMSS+ introduces one hyper-parameter \(\tau\) in Equation \ref{con:mask_ratio}. In this section, we vary \(\tau\) within the set \{0.1, 0.2, 0.25, 0.3, 0.4, 0.5, 0.6, 0.7, 0.8, 0.9, 1, 2, 4\} on Kinetics-sound to explore its impact on model performance and the variations in gradient update ratios across two modalities. Since the gradient update ratio in our method is on the batch level, we calculate the average gradient update ratio for each modality across all iterations within the initial 10 epochs, denoted here as \(\overline{\rho}\). \(\overline{\rho}^{(1)}\)/\(\overline{\rho}^{(2)}\) represent the Audio/Video modalities.
As illustrated in Table~\ref{tab:parameter}, we observe that even when \(\tau = 1\), indicating the absence of \(\tau\), the performance of AMSS+ still surpasses the results of the state-of-the-art method (67.10). Moreover, when \(\tau < 1\), there is a pronounced size difference in the gradient update ratio among modalities, facilitating a better balance in optimization across modalities, thereby enhancing the overall efficacy of the model. Conversely, when \(\tau > 1\), this size difference is markedly reduced, and the issue of modality imbalance remains significant, thus limiting the performance of the model.
Intriguingly, it is observed that the size difference between modalities should not be excessively large as it could detrimentally impact the model's performance, potentially by inhibiting the update momentum of the dominant modality, as exemplified when $\tau = 0.1$.
Moreover, through meticulous hyper-parameter modulation, AMSS+ demonstrates an ability to surpass previous benchmarks, achieving enhanced outcomes. For example, at \(\tau=0.2\), AMSS+ exhibits a performance increment of 0.27 compared to our previous results in the Kinetics-sound dataset. 
Additionally, when fixing ${\overline{\rho}}^{(1)}=1$ and $\overline{\rho}^{(2)}=1$, i.e., not masking the parameters, and scaling the parameters only with $\frac{1}{p_j+L_i}$ in AMSS+, the model performance is 69.39. This indicates that the masking subnetwork strategy in AMSS+ is crucial.
%

\vspace{-5pt}
\section{Conclusion}
In our study, we found that the strategy of employing uniformly weighted modulation parameters for gradient modifications is sub-optimal for resolving the challenges posed by modality imbalance. To address this issue more effectively, we introduce the innovative approach of Adaptively Mask Subnetworks considering Modal Significance (AMSS). Our approach emphasizes performing gradient updates on differently sized, more promising subnetworks, selected adaptively for each modality, based on the batch-level mutual information rate. We conduct a theoretical analysis of effectiveness of this strategy and further propose an unbiased estimation optimization strategy, AMSS+. Furthermore, our approach can serve as a flexible plug-in strategy for existing multi-modal models.


%

%
%

%

\ifCLASSOPTIONcaptionsoff
  \newpage
\fi

\bibliographystyle{IEEEtran}{
\bibliography{IEEEabrv, AMSS}}
\vspace{-10pt}
\begin{IEEEbiography}
[{\includegraphics[width=1in,height=1.25in,clip,keepaspectratio]{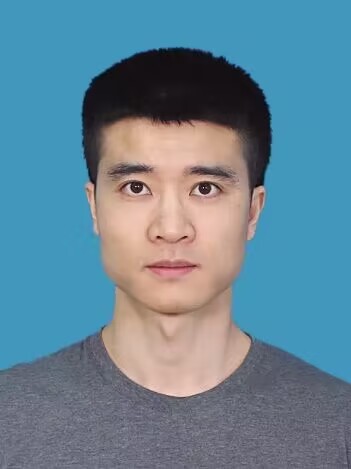}}]{Yang Yang}
received the Ph.D. degree in computer science, Nanjing University, China in 2019. At the same year, he became a faculty member at Nanjing University of Science and Technology, China. He is currently a Professor with the school of Computer Science and Engineering. His research interests lie primarily in machine learning and data mining, including heterogeneous learning, model reuse, and incremental mining.  He has published prolifically in refereed journals and conference proceedings, including IEEE Transactions on Knowledge and Data Engineering (TKDE), ACM Transactions on Information Systems (ACM TOIS), ACM Transactions on Knowledge Discovery from Data (TKDD), ACM SIGKDD, ACM SIGIR, WWW, IJCAI, and AAAI. He was the recipient of the the Best Paper Award of ACML-2017. He serves as PC in leading conferences such as IJCAI, AAAI, ICML, NeurIPS, etc.
\end{IEEEbiography}

\begin{IEEEbiography}[{\includegraphics[width=1in,height=1.25in,clip,keepaspectratio]{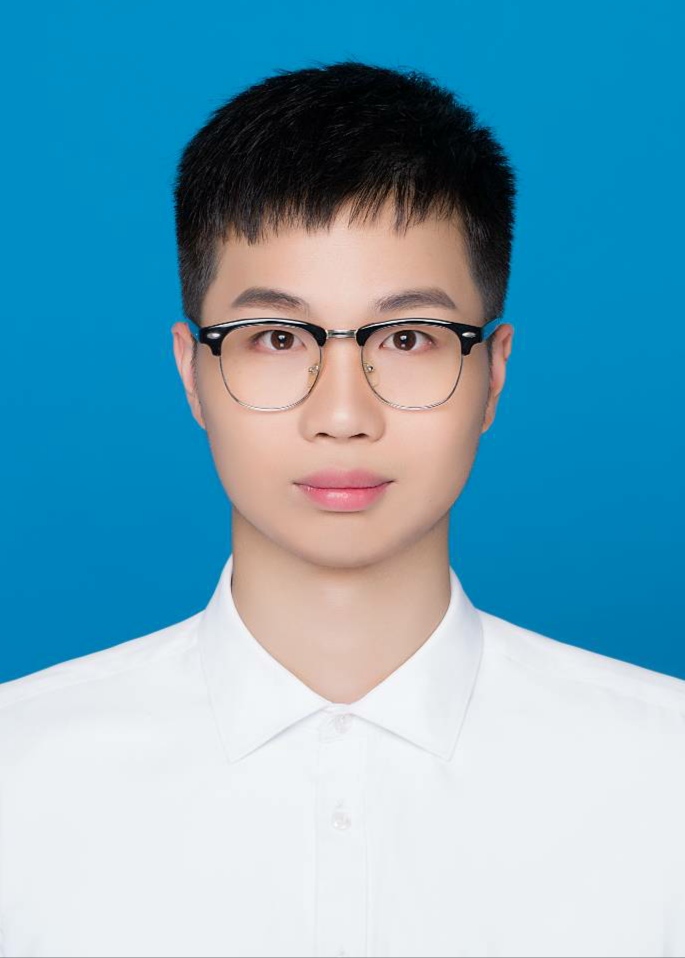}}]{Hongpeng Pan}
is currently working towards the M.S. degree at the School of Computer Science and Engineering, Nanjing University of Science and Technology. His research interests mainly lie in deep learning and data mining. He is currently focusing on multi-modal learning.
\end{IEEEbiography}

\begin{IEEEbiography}[{\includegraphics[width=1in,height=1.25in,clip,keepaspectratio]{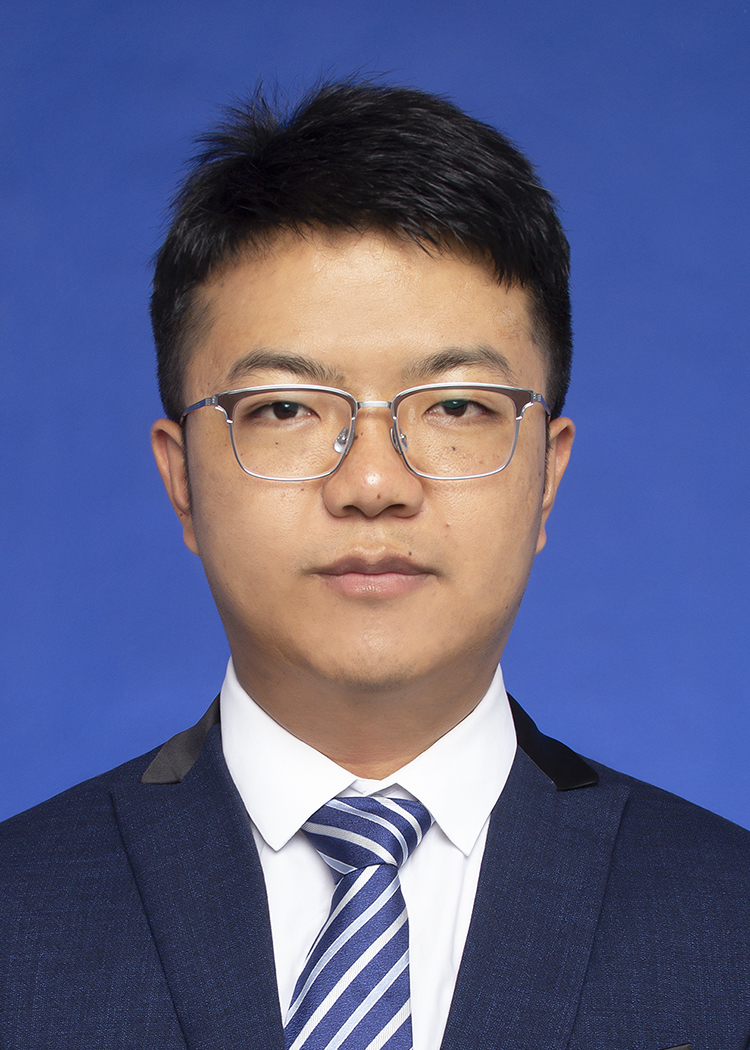}}]{Qing-Yuan Jiang}
received the BSc and the PhD degrees in computer science from Nanjing University, China. He has publised 9 papers in leading international journals/conferences. He serves as a PC member in leading conferences such as AAAI, IJCAI, etc. He is currently a researcher engineer at Huawei. His research interests are in learning to hash and multi-modal retrieval.  
\end{IEEEbiography}

\begin{IEEEbiography}[{\includegraphics[width=1in,height=1.25in,clip,keepaspectratio]{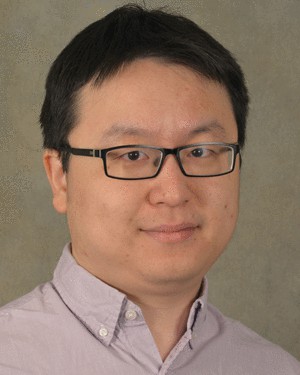}}]{Yi Xu}
    received the PhD degree in computer science from the University of Iowa, Iowa City, Iowa USA, in 2019. He is currently a professor with the School of Control Science and Engineering, Dalian University of Technology, China. His research interests are machine learning, optimization, deep learning, and statistical learning theory. He has published more than twenty papers in refereed journals and conference proceedings, including IEEE Transactions on Pattern Analysis and Machine Intelligence (TPAMI), Transactions on Machine Learning Research, NeurIPS, ICML, ICLR, CVPR, AAAI, IJCAI, UAI. He serves as a SPC/PC/Reviewer in leading conferences such as NeurIPS, ICML, ICLR, CVPR, AAAI, IJCAI, etc.   
\end{IEEEbiography}

\begin{IEEEbiography}[{\includegraphics[width=1in,height=1.25in,clip,keepaspectratio]{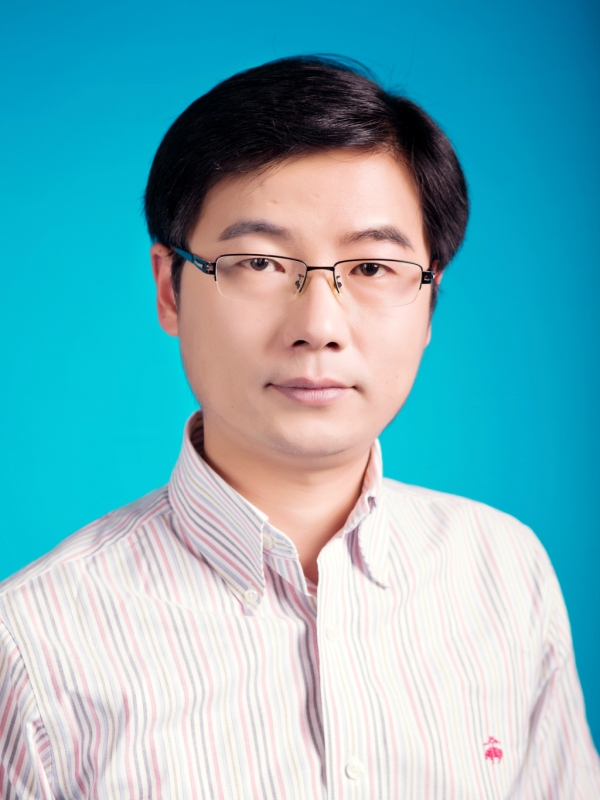}}]
{Jinhui Tang} (Senior Member, IEEE) received the B.E. and Ph.D. degrees from the University of Science and Technology of China, Hefei, China, in 2003 and 2008, respectively. He is currently a Professor with the Nanjing University of Science and Technology, Nanjing, China. He has authored more than 200 articles in top-tier journals and conferences. His research interests include multimedia analysis and computer vision. Dr. Tang was a recipient of  the Best Paper Awards in ACM MM 2007, PCM 2011, ICIMCS 2011, and ACM MM Asia 2020, the Best Paper Runner-Up in ACM MM 2015, and the Best Student Paper Awards in MMM 2016 and ICIMCS 2017. He has served as an Associate Editor for the IEEE TMM, IEEE TNNLS, the IEEE TKDE, and the IEEE TCSVT. He is a Fellow of IAPR. 
\end{IEEEbiography}



\onecolumn 
\begin{center}\Large
{\bf Appendix}
\end{center}

\renewcommand{\thefigure}{\Alph{figure}}
\renewcommand{\thesection}{\Alph{section}}

\setcounter{figure}{0}
\setcounter{section}{0}

\section{Convergence Analysis}In this subsection, we provide a convergence analysis for the proposed method under the non-convex optimization setting. The detailed process of the proof is as follows.
\subsection{Case I: Biased Stochastic Gradient}
For the first case of biased stochastic gradient, recall that the update step of stochastic gradient descent (SGD) is 
\begin{align}\label{app:eqn:updt:sgd}
{\bf w}(t+1)={\bf w}(t)-\eta \nabla \ell({\bf w}(t)) \odot {\bf m}{(t)}
\end{align}
where $\nabla \ell({\bf w}(t))$ is the stochastic version of the gradient of loss function $\nabla\mathcal{L}({\bf w}(t))$ at $\theta_t$ and $\eta>0$ is the learning rate, and the element of ${\bf m}{(t)}$ is given by
\begin{equation}
m_j{(t)}= \left\{\begin{array}{ll}
1,  & \text{if}\;\;w_j(t) \in {{\mathcal{S}}(t)}, \\
0,  & \text{otherwise},
\end{array}\right.
\end{equation}
To analyze the convergence rate, we make the following common assumptions for the loss function. 
\begin{assumption}[Smoothness]\label{ass:smooth}
We assume that the loss function $\mathcal{L}$ is $L$-smooth. That is,
for any ${\bf w}, {\bf w}'$, we have
\begin{align}\label{ass:smotth:eqn}
\mathcal{L}\left({\bf w}\right) - \mathcal{L}\left({\bf w}'\right) 
 \le\langle \nabla \mathcal{L}\left({\bf w}'\right), {\bf w} - {\bf w}'\rangle + \frac{L}{2}\|{\bf w} - {\bf w}'\|^2.
\end{align}
\end{assumption}
We suppose that the stochastic gradient $\nabla \ell({\bf w}(t))$ is unbiased, i.e., $\mathbb{E}[\nabla \ell({\bf w}(t))] = \nabla\mathcal{L}({\bf w}(t))$. However, since $\nabla \ell({\bf w}(t))$ and ${\bf m}(t)$ are not independent, the stochastic gradient $\nabla \ell({\bf w}(t)) \odot {\bf m}(t)$ is biased, that is, $\mathbb{E}[\nabla \ell({\bf w}(t))  \odot {\bf m}(t)] \neq \nabla\mathcal{L}({\bf w}(t))$. We make the following assumption for the stochastic gradient $\nabla \ell({\bf w}(t)) \odot {\bf m}(t)$.
\begin{assumption}[Bounded Variance]\label{ass:var}
We assume that the stochastic gradient $\nabla \ell({\bf w}) \odot {\bf m}(t) $ is biased and its variance is bounded. That is,
for any ${\bf w}(t)$ and ${\bf m}(t)$, we have
\begin{align}\label{ass:var:eqn:1}
\mathbb{E}[\nabla \ell({\bf w}(t)) \odot {\bf m}(t) ] = \nabla \mathcal{L}({\bf w}(t)) + b({\bf w}(t)),
\end{align}
and
\begin{align}\label{ass:var:eqn:2}
\mathbb{E}[\|\nabla \ell({\bf w}(t)) \odot {\bf m}(t) - \mathbb{E}[\nabla \ell({\bf w}(t)) \odot {\bf m}(t)]\|]^2 \le \nu \|\nabla\mathcal{L}({\bf w}(t)) + b({\bf w}(t))\|^2 + \sigma^2,
\end{align}
where $\sigma^2 \ge 0$ and $\nu \ge 0 $ are two constants.
\end{assumption}
\begin{assumption}[Mask-Incurred Error]\label{ass:error}
For any ${\bf w}(t)$ and ${\bf m}(t)$, we have
\begin{align}\label{ass:error:eqn}
\|\mathbb{E}[\nabla \ell({\bf w}(t)) \odot {\bf m}(t) -  \nabla \ell({\bf w}(t))] \| \le \delta \|\mathbb{E}[\nabla \ell({\bf w}(t))]\|,
\end{align}
where the constant $\delta \in [0,1]$.
\end{assumption}

We present the convergence property in the following theorem.
\begin{thm}[Formal, AMSS]
Under Assumptions~\ref{ass:smooth}, \ref{ass:var}, \ref{ass:error}, if the learning rate is set as $\eta = \frac{1}{(1+\nu)L\sqrt{T}}$, then
\begin{align}
\label{thm:1:eqn:0}
\frac{1}{T}\sum_{t=1}^{T}\|\nabla\mathcal{L}({\bf w}(t)) \|^2 \le   \frac{2(1+\nu)L\mathcal{L}\left({\bf w}(1)\right)}{\sqrt{T}(1-\delta^2)
}  + \frac{ \sigma^2}{(1+\nu)\sqrt{T}(1-\delta^2)}. 
\end{align}
\end{thm}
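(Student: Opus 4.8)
The plan is to run the standard non-convex descent analysis, where the only nonstandard ingredient is converting the \textbf{Mask-Incurred Error} bound (Assumption~\ref{ass:error}) into a control on the bias $b({\bf w}(t))$ and then extracting the sharp $(1-\delta^2)$ factor. First I would abbreviate the masked stochastic gradient as $g_t := \nabla\ell({\bf w}(t)) \odot {\bf m}(t)$ and apply the smoothness inequality (Assumption~\ref{ass:smooth}) to the update ${\bf w}(t+1) - {\bf w}(t) = -\eta g_t$, obtaining $\mathcal{L}({\bf w}(t+1)) \le \mathcal{L}({\bf w}(t)) - \eta\langle \nabla\mathcal{L}({\bf w}(t)), g_t\rangle + \frac{L\eta^2}{2}\|g_t\|^2$. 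Taking the conditional expectation and substituting $\mathbb{E}[g_t] = \nabla\mathcal{L}({\bf w}(t)) + b({\bf w}(t))$ from Assumption~\ref{ass:var} reduces the task to bounding one inner-product term and one second-moment term.

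The key preliminary step is to note that, since the underlying stochastic gradient is unbiased ($\mathbb{E}[\nabla\ell({\bf w}(t))] = \nabla\mathcal{L}({\bf w}(t))$), the left-hand side of Assumption~\ref{ass:error} equals exactly $\|b({\bf w}(t))\|$, whence $\|b({\bf w}(t))\| \le \delta\|\nabla\mathcal{L}({\bf w}(t))\|$. I would then treat the inner-product term via the polarization identity $\langle a, a + b\rangle = \tfrac{1}{2}(\|a\|^2 + \|a+b\|^2 - \|b\|^2)$ with $a = \nabla\mathcal{L}({\bf w}(t))$ and $b = b({\bf w}(t))$; combined with the bias bound this yields $\langle \nabla\mathcal{L}, \nabla\mathcal{L}+b\rangle \ge \tfrac{1}{2}[(1-\delta^2)\|\nabla\mathcal{L}\|^2 + \|\nabla\mathcal{L}+b\|^2]$, which is precisely where the $(1-\delta^2)$ factor enters. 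For the second-moment term I would use the variance decomposition $\mathbb{E}[\|g_t\|^2] = \|\mathbb{E}[g_t]\|^2 + \mathbb{E}[\|g_t - \mathbb{E}[g_t]\|^2]$ together with Assumption~\ref{ass:var} to obtain $\mathbb{E}[\|g_t\|^2] \le (1+\nu)\|\nabla\mathcal{L}+b\|^2 + \sigma^2$.

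Assembling these pieces produces a one-step inequality in which the coefficient of $\|\nabla\mathcal{L}+b\|^2$ is $\tfrac{\eta}{2}(L\eta(1+\nu) - 1)$. With the prescribed choice $\eta = \frac{1}{(1+\nu)L\sqrt{T}}$ this coefficient equals $\tfrac{\eta}{2}(T^{-1/2} - 1) \le 0$ for $T\ge 1$, so the $\|\nabla\mathcal{L}+b\|^2$ term may simply be discarded, leaving $\mathbb{E}[\mathcal{L}({\bf w}(t+1))] \le \mathcal{L}({\bf w}(t)) - \tfrac{\eta}{2}(1-\delta^2)\|\nabla\mathcal{L}({\bf w}(t))\|^2 + \tfrac{L\eta^2\sigma^2}{2}$. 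Finally I would rearrange this bound, sum over $t = 1,\dots,T$ so the $\mathcal{L}$ terms telescope, drop the nonnegative $\mathbb{E}[\mathcal{L}({\bf w}(T+1))]$, divide by $T$, and substitute $\eta$ to recover the two stated terms $\frac{2(1+\nu)L\mathcal{L}({\bf w}(1))}{\sqrt{T}(1-\delta^2)}$ and $\frac{\sigma^2}{(1+\nu)\sqrt{T}(1-\delta^2)}$.

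The main obstacle is the second paragraph: obtaining the sharp $(1-\delta^2)$ rather than a cruder $(1-\delta)$ requires using the polarization identity and \emph{deliberately retaining} the $\|\nabla\mathcal{L}+b\|^2$ term so that the learning-rate choice can absorb it, instead of discarding the cross term by Cauchy--Schwarz alone. One must also be careful that the variance bound in Assumption~\ref{ass:var} is phrased in terms of $\|\nabla\mathcal{L}+b\|^2 = \|\mathbb{E}[g_t]\|^2$, so that the retained negative term and the positive contribution from the second moment are expressed in the same quantity and cancel cleanly under the chosen $\eta$.
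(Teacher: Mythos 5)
Your proposal is correct and follows essentially the same route as the paper's proof: the smoothness descent step, the variance decomposition $\mathbb{E}[\|g_t\|^2] \le (1+\nu)\|\nabla\mathcal{L}+b\|^2 + \sigma^2$, the identification $\|b({\bf w}(t))\| \le \delta\|\nabla\mathcal{L}({\bf w}(t))\|$ via unbiasedness of $\nabla\ell$, and the telescoping sum with $\eta = \frac{1}{(1+\nu)L\sqrt{T}}$ all match. Your polarization identity is algebraically the same device as the paper's identity $-\langle a,b\rangle + \frac{\|b\|^2}{2} = \frac{\|a-b\|^2}{2} - \frac{\|a\|^2}{2}$ (applied with $b = \nabla\mathcal{L} + b({\bf w}(t))$), and your retention of the $\|\nabla\mathcal{L}+b\|^2$ term with coefficient $\frac{\eta}{2}(L\eta(1+\nu)-1) \le 0$ is just an explicit form of the paper's use of $\eta \le \frac{1}{(1+\nu)L}$ to absorb that term.
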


\begin{proof}
By Assumption~\ref{ass:smooth}, we have
\begin{align}\label{thm:1:eqn:1}
\nonumber \mathcal{L}\left({\bf w}(t+1)\right) - \mathcal{L}\left({\bf w}(t)\right) \le& \langle \nabla \mathcal{L}\left({\bf w}(t)\right), {\bf w}(t+1) - {\bf w}(t)\rangle + \frac{L}{2}\|{\bf w}(t+1) - {\bf w}(t)\|^2 \\
\overset{(\ref{app:eqn:updt:sgd})}{=}& -\eta \langle \nabla \mathcal{L}\left({\bf w}(t)\right), {\bf g}(t)\rangle + \frac{\eta^2L}{2}\| {\bf g}(t)\|^2 
\end{align}
where ${\bf g}(t) := \nabla\ell\left({\bf w}(t)\right) \odot {\bf m}(t)$.
Taking expectation over both sides of (\ref{thm:1:eqn:1}) and by using Assumption~\ref{ass:var}, we have
\begin{align}\label{thm:1:eqn:2}
\nonumber \mathbb{E}[\mathcal{L}\left({\bf w}(t+1)\right) - \mathcal{L}\left({\bf w}(t)\right)] \le& -\eta \langle \nabla \mathcal{L}\left({\bf w}(t)\right), \nabla \mathcal{L}\left({\bf w}(t)\right) + b({\bf w}(t))\rangle + \frac{\eta^2L}{2}\mathbb{E}[\| {\bf g}(t)\|^2] \\
\nonumber = & -\eta \langle \nabla \mathcal{L}\left({\bf w}(t)\right), \nabla \mathcal{L}\left({\bf w}(t)\right) + b({\bf w}(t))\rangle + \frac{\eta^2L}{2}(\mathbb{E}[\| {\bf g}(t) - \mathbb{E}[{\bf g}(t) ]\|^2] + \mathbb{E}[\|  \mathbb{E}[{\bf g}(t) ]\|^2] )\\
\nonumber \le & -\eta \langle \nabla \mathcal{L}\left({\bf w}(t)\right), \nabla \mathcal{L}\left({\bf w}(t)\right) + b({\bf w}(t))\rangle + \frac{\eta^2L}{2}\left( (1+\nu) \|\nabla\mathcal{L}({\bf w}(t)) + b({\bf w}(t))\|^2 + \sigma^2 \right)\\
\le & -\eta \langle \nabla \mathcal{L}\left({\bf w}(t)\right), \nabla \mathcal{L}\left({\bf w}(t)\right) + b({\bf w}(t))\rangle + \frac{\eta }{2} \|\nabla\mathcal{L}({\bf w}(t)) + b({\bf w}(t))\|^2 + \frac{\eta^2 L \sigma^2}{2}, 
\end{align}
where the last inequality is due to $\eta  \le \frac{1}{(1+\nu)L} $. Since $ - \langle a,b\rangle + \frac{\|b\|^2}{2} = \frac{\|a - b\|^2}{2}  - \frac{\|a\|^2}{2}$, then (\ref{thm:1:eqn:2}) implies that
\begin{align}\label{thm:1:eqn:3}
\nonumber \mathbb{E}[\mathcal{L}\left({\bf w}(t+1)\right) - \mathcal{L}\left({\bf w}(t)\right)] \le& -\eta \langle \nabla \mathcal{L}\left({\bf w}(t)\right), \nabla \mathcal{L}\left({\bf w}(t)\right) + b({\bf w}(t))\rangle + \frac{\eta^2L}{2}\mathbb{E}[\| {\bf g}(t)\|^2] \\
\le &   \frac{\eta }{2} \| b({\bf w}(t))\|^2 -  \frac{\eta }{2} \|\nabla\mathcal{L}({\bf w}(t)) \|^2    + \frac{\eta^2 L \sigma^2}{2}, 
\end{align}
Next, by (\ref{ass:var:eqn:1}) in Assumption~\ref{ass:var} and (\ref{ass:error:eqn}) in Assumption~\ref{ass:error}, we know
\begin{align}\label{thm:1:eqn:4}
\|b({\bf w}(t))\| = \|\mathbb{E}[\nabla \ell({\bf w}(t)) \odot {\bf m}(t) -  \nabla \ell({\bf w}(t))] \| \le \delta \|\mathbb{E}[\nabla \ell({\bf w}(t))]\| = \delta \|\nabla\mathcal{L}({\bf w}(t))\|.
\end{align}
Therefore, by (\ref{thm:1:eqn:3}) and (\ref{thm:1:eqn:4}) we have
\begin{align}\label{thm:1:eqn:5}
\mathbb{E}[\mathcal{L}\left({\bf w}(t+1)\right) - \mathcal{L}\left({\bf w}(t)\right)] 
\le &  -  \frac{\eta (1-\delta^2) }{2} \|\nabla\mathcal{L}({\bf w}(t)) \|^2    + \frac{\eta^2 L \sigma^2}{2}, 
\end{align}
which implies
\begin{align}
\|\nabla\mathcal{L}({\bf w}(t)) \|^2 
\le &  \frac{2\mathbb{E}[\mathcal{L}\left({\bf w}(t)\right) - \mathcal{L}\left({\bf w}(t+1)\right)]}{\eta(1-\delta^2)
}  + \frac{\eta L \sigma^2}{1-\delta^2}.
\end{align}
By summing up for $t=1,\dots,T$, we have
\begin{align}
\frac{1}{T}\sum_{t=1}^{T}\|\nabla\mathcal{L}({\bf w}(t)) \|^2 
\le \frac{2\mathbb{E}[\mathcal{L}\left({\bf w}(1)\right) - \mathcal{L}\left({\bf w}(T+1)\right)]}{T\eta(1-\delta^2)
}  + \frac{\eta L \sigma^2}{1-\delta^2}
\le \frac{2\mathcal{L}\left({\bf w}(1)\right)}{T\eta(1-\delta^2)
}  + \frac{\eta L \sigma^2}{1-\delta^2}.
\end{align}
By setting $\eta = \frac{1}{(1+\nu)L\sqrt{T}}$, we get
\begin{align}\label{thm:1:eqn:7}
\frac{1}{T}\sum_{t=1}^{T}\|\nabla\mathcal{L}({\bf w}(t)) \|^2 \le   \frac{2(1+\nu)L\mathcal{L}\left({\bf w}(1)\right)}{\sqrt{T}(1-\delta^2)
}  + \frac{ \sigma^2}{(1+\nu)\sqrt{T}(1-\delta^2)}.
\end{align}
\end{proof}

\subsection{Case II: Unbiased Stochastic Gradient}
For the second case of unbiased stochastic gradient, recall that the update step of SGD is 
\begin{align}\label{app:eqn:updt:sgd:unbiased}
{\bf w}(t+1)={\bf w}(t)-\eta \nabla \ell({\bf w}(t)) \odot \hat{\bf m}(t)
\end{align}
where $\nabla \ell({\bf w}(t))$ is the stochastic version of the gradient of loss function $\nabla\mathcal{L}({\bf w}(t))$ at ${\bf w}(t)$ and $\eta>0$ is the learning rate, and the element of $\hat{\bf m}(t)$ is given by
\begin{equation}
\hat{m}_j(t)= \left\{\begin{array}{ll}
\frac{1}{p_j(t)},  & \text{if}\;\;w_j(t) \in {{\mathcal{S}}(t)}, \\
0,  & \text{otherwise},
\end{array}\right.
\end{equation}
with $p_i(t)=\mathbb{P}( {w}_i(t)\in {s_i(t)})$.
We suppose that the stochastic gradient $\nabla \ell({\bf w}(t))$ is unbiased, i.e., $\mathbb{E}[\nabla \ell({\bf w}(t))] = \nabla\mathcal{L}({\bf w}(t))$. Then we have
$ \mathbb{E}[\nabla \ell({\bf w}(t))  \odot \hat{\bf m}(t)] = \mathbb{E}[\nabla \ell({\bf w}(t)) \odot  {{\bf p}(t)}^{-1} \odot {\bf m}(t)  ] =  \mathbb{E}[\mathbb{E}_{{\bf m}(t)}[\nabla \ell({\bf w}(t)) \odot  {{\bf p}(t)}^{-1} \odot {\bf m}(t) |\nabla \ell({\bf w}(t))] ]   = \mathbb{E}[\nabla \ell({\bf w}(t)) \odot  {{\bf p}(t)}^{-1} \odot\mathbb{E}_{{\bf m}(t)}[ {\bf m}(t) |\nabla \ell({\bf w}(t))] ] =\mathbb{E}[\nabla \ell({\bf w}(t)) ]= \nabla\mathcal{L}({\bf w}(t))$, indicating that $\nabla \ell({\bf w}(t))  \odot \hat{\bf m}(t)$ is also unbiased. We make the following assumption for the stochastic gradient $\nabla \ell({\bf w}(t)) \odot \hat{\bf m}(t)$.
\begin{assumption}[Bounded Variance]\label{ass:var:unbiased}
We assume that the stochastic gradient $\nabla \ell({\bf w}) \odot \hat{\bf m}(t) $ is biased and its variance is bounded. That is,
for any ${\bf w}(t)$ and $\hat{\bf m}(t)$, we have
\begin{align}\label{ass:var:unbiased:eqn:1}
\mathbb{E}[\nabla \ell({\bf w}(t)) \odot \hat{\bf m}(t) ] = \nabla \mathcal{L}({\bf w}(t)),
\end{align}
and
\begin{align}\label{ass:var:unbiased:eqn:2}
\mathbb{E}[\|\nabla \ell({\bf w}(t)) \odot \hat{\bf m}(t) - \mathbb{E}[\nabla \ell({\bf w}(t)) \odot \hat{\bf m}(t)]\|]^2 \le \nu \|\nabla\mathcal{L}({\bf w}(t))\|^2 + \sigma^2,
\end{align}
where $\sigma^2 \ge 0$ and $\nu \ge 0 $ are two constants.
\end{assumption}

We present the convergence property in the following theorem.
\begin{thm}[Formal, AMSS+]
Under Assumptions~\ref{ass:smooth}, \ref{ass:var:unbiased}, if the learning rate is set as $\eta = \frac{1}{(1+\nu)L\sqrt{T}}$, then
\begin{align}\label{thm:2:eqn:0}
\frac{1}{T}\sum_{t=1}^{T}\|\nabla\mathcal{L}({\bf w}(t)) \|^2 \le   \frac{2(1+\nu)L\mathcal{L}\left({\bf w}(1)\right)}{\sqrt{T}
}  + \frac{ \sigma^2}{(1+\nu)\sqrt{T}}. 
\end{align}
\end{thm}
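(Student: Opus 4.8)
The plan is to run exactly the descent-lemma argument of Case~I, but to exploit the unbiasedness established just above Assumption~\ref{ass:var:unbiased} so that the bias term $b({\bf w}(t))$ vanishes identically and the $(1-\delta^2)$ degradation factor never appears. First I would start from the $L$-smoothness inequality of Assumption~\ref{ass:smooth} applied to consecutive iterates, and substitute the update rule (\ref{app:eqn:updt:sgd:unbiased}) to obtain, with ${\bf g}(t):=\nabla\ell({\bf w}(t))\odot\hat{\bf m}(t)$,
\begin{align}
\mathcal{L}\left({\bf w}(t+1)\right) - \mathcal{L}\left({\bf w}(t)\right) \le -\eta \langle \nabla \mathcal{L}\left({\bf w}(t)\right), {\bf g}(t)\rangle + \frac{\eta^2L}{2}\| {\bf g}(t)\|^2. \nonumber
\end{align}

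Next I would take expectation over both sides. Here is the single place where Case~II differs from Case~I: by (\ref{ass:var:unbiased:eqn:1}) we have $\mathbb{E}[{\bf g}(t)] = \nabla\mathcal{L}({\bf w}(t))$ exactly, so the inner-product term collapses cleanly to $-\eta\|\nabla\mathcal{L}({\bf w}(t))\|^2$ with no additive bias. This means I can skip the algebraic identity $-\langle a,b\rangle+\tfrac{\|b\|^2}{2}=\tfrac{\|a-b\|^2}{2}-\tfrac{\|a\|^2}{2}$ and the subsequent bound on $\|b({\bf w}(t))\|$ that were needed in Case~I. For the second-moment term I would use the bias--variance split $\mathbb{E}[\|{\bf g}(t)\|^2]=\mathbb{E}[\|{\bf g}(t)-\mathbb{E}[{\bf g}(t)]\|^2]+\|\nabla\mathcal{L}({\bf w}(t))\|^2$ and bound the variance by (\ref{ass:var:unbiased:eqn:2}), giving
\begin{align}
\mathbb{E}[\mathcal{L}({\bf w}(t+1)) - \mathcal{L}({\bf w}(t))] \le -\eta\|\nabla\mathcal{L}({\bf w}(t))\|^2 + \frac{\eta^2 L(1+\nu)}{2}\|\nabla\mathcal{L}({\bf w}(t))\|^2 + \frac{\eta^2 L\sigma^2}{2}. \nonumber
\end{align}
Invoking the step-size constraint $\eta\le\frac{1}{(1+\nu)L}$ (which the prescribed $\eta=\frac{1}{(1+\nu)L\sqrt T}$ satisfies), the middle coefficient is at most $\frac{\eta}{2}$, so the first two terms combine into $-\frac{\eta}{2}\|\nabla\mathcal{L}({\bf w}(t))\|^2$.

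Finally I would rearrange into $\|\nabla\mathcal{L}({\bf w}(t))\|^2 \le \frac{2\,\mathbb{E}[\mathcal{L}({\bf w}(t))-\mathcal{L}({\bf w}(t+1))]}{\eta} + \eta L\sigma^2$, sum over $t=1,\dots,T$ so the loss differences telescope, divide by $T$, and use $\mathcal{L}({\bf w}(T+1))\ge 0$ to drop the final iterate. Substituting $\eta=\frac{1}{(1+\nu)L\sqrt T}$ then converts the two remaining terms into $\frac{2(1+\nu)L\mathcal{L}({\bf w}(1))}{\sqrt T}$ and $\frac{\sigma^2}{(1+\nu)\sqrt T}$, which is (\ref{thm:2:eqn:0}). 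I do not expect a genuine obstacle: the entire content is that unbiasedness replaces the $(1-\delta^2)$ factor of Case~I by $1$. The only point warranting care is confirming $\mathbb{E}[{\bf g}(t)]=\nabla\mathcal{L}({\bf w}(t))$, but this is already verified in the displayed tower-expectation computation preceding Assumption~\ref{ass:var:unbiased}, using $\mathbb{E}_{{\bf m}(t)}[{\bf m}(t)\mid\nabla\ell({\bf w}(t))]={\bf p}(t)$ together with the inverse-probability weighting $\hat m_j=1/p_j$.
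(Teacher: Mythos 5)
Your proposal is correct and follows the paper's proof of this theorem essentially step for step: the same descent-lemma start with $\hat{\bf g}(t)=\nabla\ell({\bf w}(t))\odot\hat{\bf m}(t)$, the same use of unbiasedness to collapse the inner product to $-\eta\|\nabla\mathcal{L}({\bf w}(t))\|^2$, the same bias--variance split of $\mathbb{E}[\|\hat{\bf g}(t)\|^2]$ bounded via the variance assumption, the same step-size condition $\eta\le\frac{1}{(1+\nu)L}$, and the same telescoping with $\eta=\frac{1}{(1+\nu)L\sqrt{T}}$. Your two explicit remarks—dropping $\mathcal{L}({\bf w}(T+1))$ via nonnegativity of the loss, and the tower-property verification that $\mathbb{E}[\hat{\bf g}(t)]=\nabla\mathcal{L}({\bf w}(t))$—are points the paper handles implicitly or in the surrounding text, not deviations.
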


\begin{proof}
By Assumption~\ref{ass:smooth}, we have
\begin{align}\label{thm:2:eqn:1}
\nonumber \mathcal{L}\left({\bf w}(t+1)\right) - \mathcal{L}\left({\bf w}(t)\right) \le& \langle \nabla \mathcal{L}\left({\bf w}(t)\right), {\bf w}(t+1) - {\bf w}(t)\rangle + \frac{L}{2}\|{\bf w}(t+1) - {\bf w}(t)\|^2 \\
\overset{(\ref{app:eqn:updt:sgd:unbiased})}{=}& -\eta \langle \nabla \mathcal{L}\left({\bf w}(t)\right), \hat{\bf g}(t)\rangle + \frac{\eta^2L}{2}\| \hat{\bf g}(t)\|^2 
\end{align}
where $\hat{\bf g}(t) := \nabla\ell\left({\bf w}(t)\right) \odot \hat{\bf m}(t)$.
Taking expectation over both sides of (\ref{thm:2:eqn:1}) and by using Assumption~\ref{ass:var:unbiased}, we have
\begin{align}\label{thm:2:eqn:2}
\nonumber \mathbb{E}[\mathcal{L}\left({\bf w}(t+1)\right) - \mathcal{L}\left({\bf w}(t)\right)] \le& -\eta \|\nabla \mathcal{L}\left({\bf w}(t)\right)\|^2+ \frac{\eta^2L}{2}\mathbb{E}[\| \hat{\bf g}(t)\|^2] \\
\nonumber = & -\eta \|\nabla \mathcal{L}\left({\bf w}(t)\right)\|^2 + \frac{\eta^2L}{2}(\mathbb{E}[\| \hat{\bf g}(t) -\nabla \mathcal{L}\left({\bf w}(t)\right)\|^2] + \|\nabla \mathcal{L}\left({\bf w}(t)\right)\|^2 )\\
\nonumber \le & -\eta \|\nabla \mathcal{L}\left({\bf w}(t)\right)\|^2 + \frac{\eta^2L}{2}\left( (1+\nu) \|\nabla\mathcal{L}({\bf w}(t)) \|^2 + \sigma^2 \right)\\
\le & -\eta \|\nabla \mathcal{L}\left({\bf w}(t)\right)\|^2 + \frac{\eta }{2} \|\nabla\mathcal{L}({\bf w}(t)) \|^2 + \frac{\eta^2 L \sigma^2}{2}, 
\end{align}
where the last inequality is due to $\eta  \le \frac{1}{(1+\nu)L} $. 
Therefore, we have
\begin{align}
\|\nabla\mathcal{L}({\bf w}(t)) \|^2 
\le &  \frac{2\mathbb{E}[\mathcal{L}\left({\bf w}(t)\right) - \mathcal{L}\left({\bf w}(t+1)\right)]}{\eta
}  + \eta L \sigma^2.
\end{align}
By summing up for $t=1,\dots,T$, we have
\begin{align}
\frac{1}{T}\sum_{t=1}^{T}\|\nabla\mathcal{L}({\bf w}(t)) \|^2 
\le \frac{2\mathbb{E}[\mathcal{L}\left({\bf w}(1)\right) - \mathcal{L}\left({\bf w}(T+1)\right)]}{T\eta
}  + \eta L \sigma^2
\le \frac{2\mathcal{L}\left({\bf w}(1)\right)}{T\eta
}  +\eta L \sigma^2.
\end{align}
By setting $\eta = \frac{1}{(1+\nu)L\sqrt{T}}$, we get
\begin{align}\label{thm:2:eqn:7}
\frac{1}{T}\sum_{t=1}^{T}\|\nabla\mathcal{L}({\bf w}(t)) \|^2 \le   \frac{2(1+\nu)L\mathcal{L}\left({\bf w}(1)\right)}{\sqrt{T}
}  + \frac{ \sigma^2}{(1+\nu)\sqrt{T}}.
\end{align}
\end{proof}

\section{Experiments}

\begin{figure*}[tbp]
\centering
\subfloat[CREMA-D]{
  \includegraphics[width=0.24\textwidth]{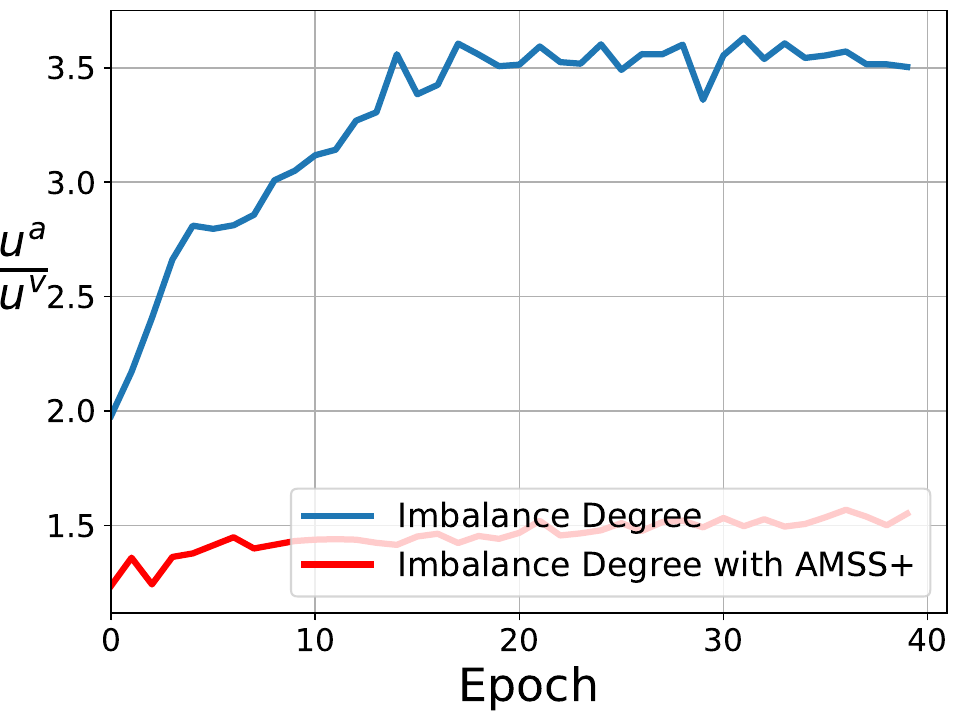}\hfill
  \includegraphics[width=0.24\textwidth]{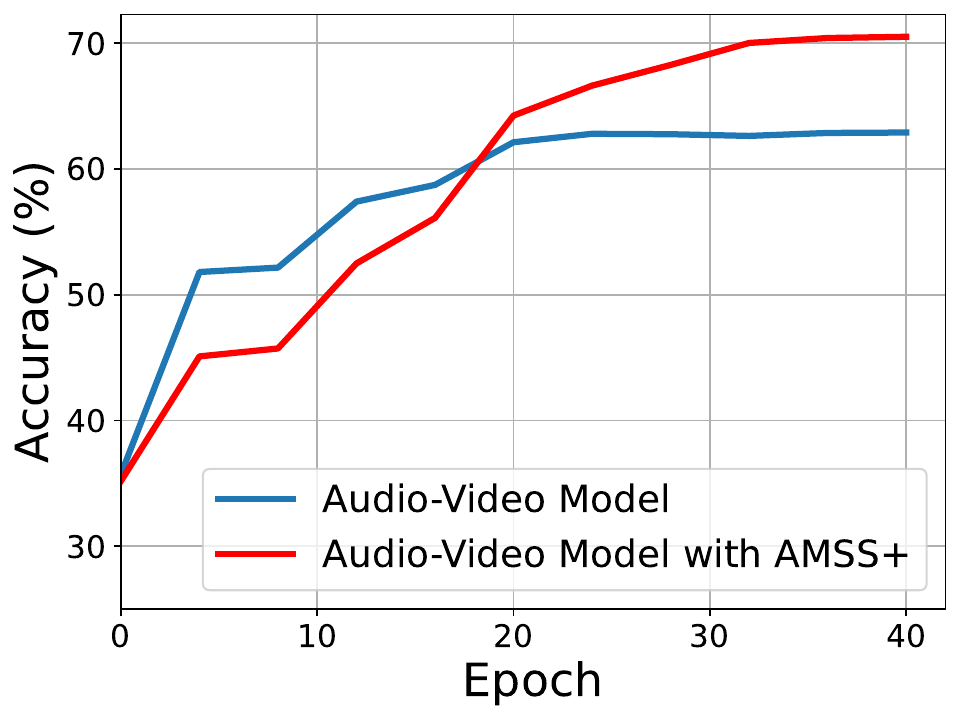}\hfill
  \includegraphics[width=0.24\textwidth]{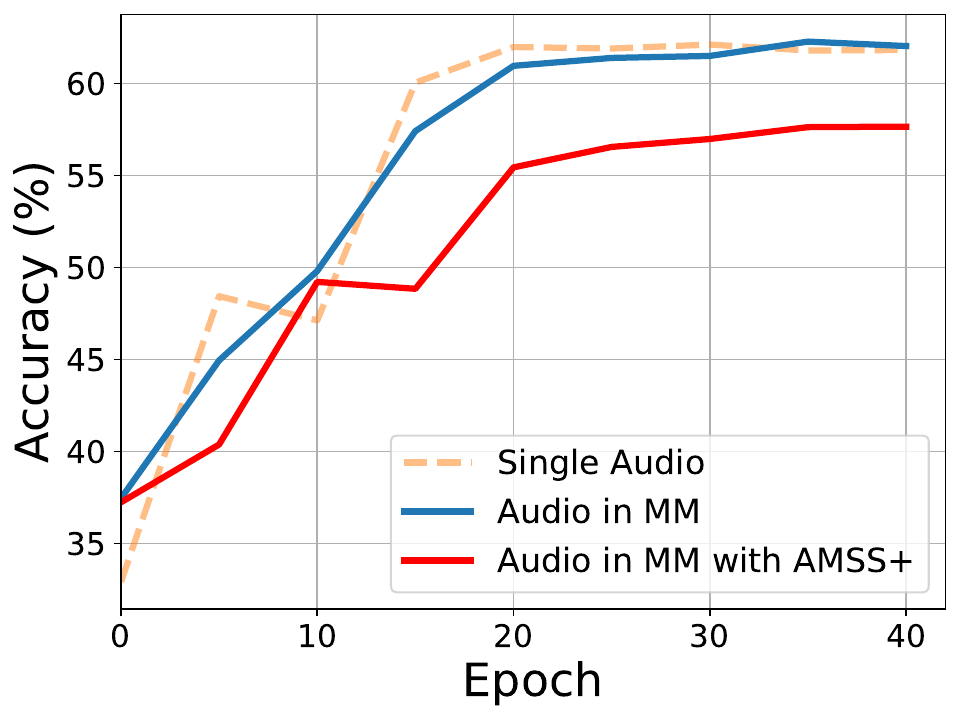}\hfill
  \includegraphics[width=0.24\textwidth]{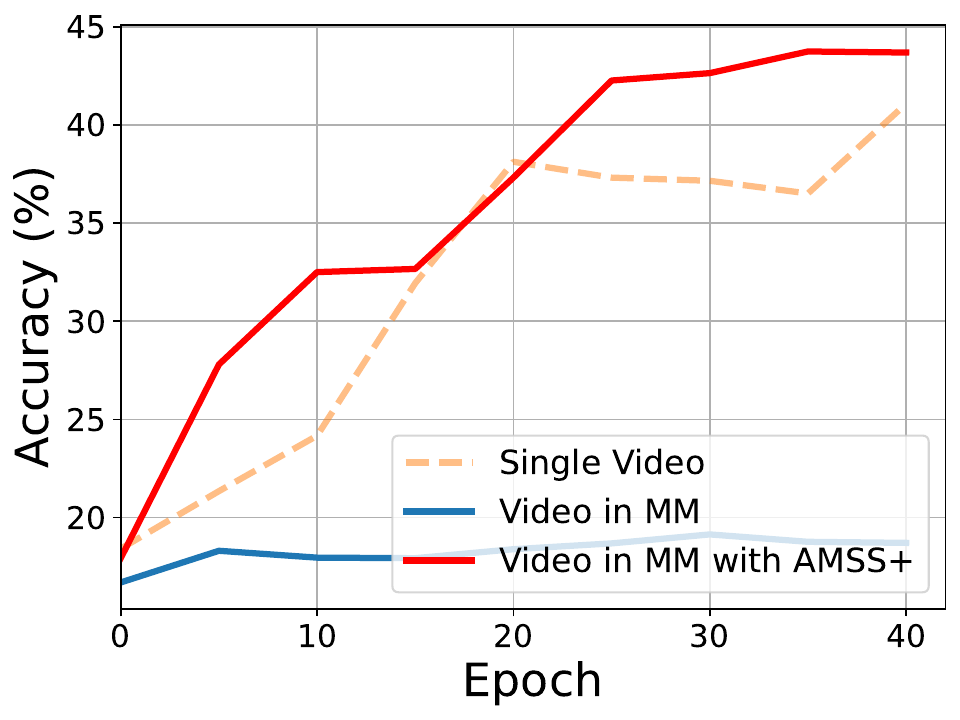}\hfill
  } \\
\subfloat[Twitter-15]{
  \includegraphics[width=0.24\textwidth]{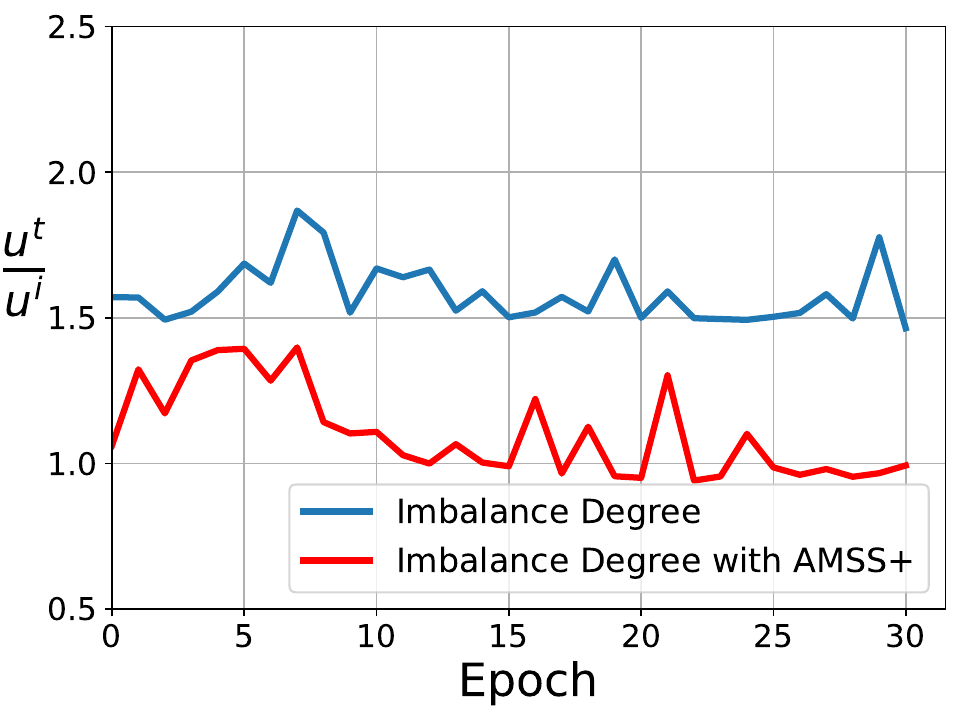}\hfill
  \includegraphics[width=0.24\textwidth]{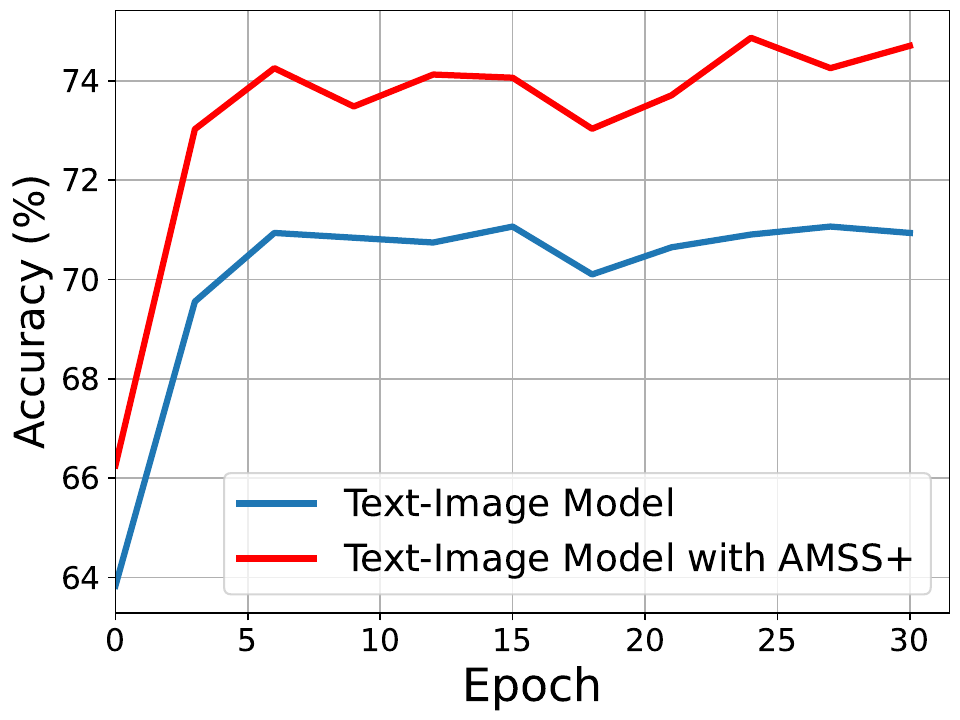}\hfill
  \includegraphics[width=0.24\textwidth]{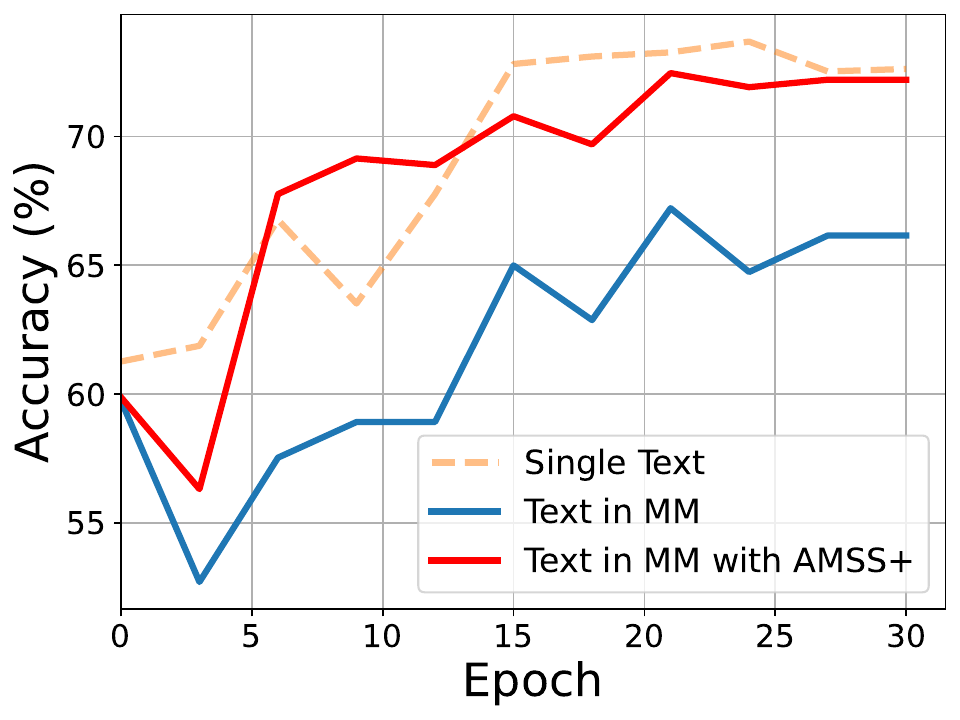}\hfill
  \includegraphics[width=0.24\textwidth]{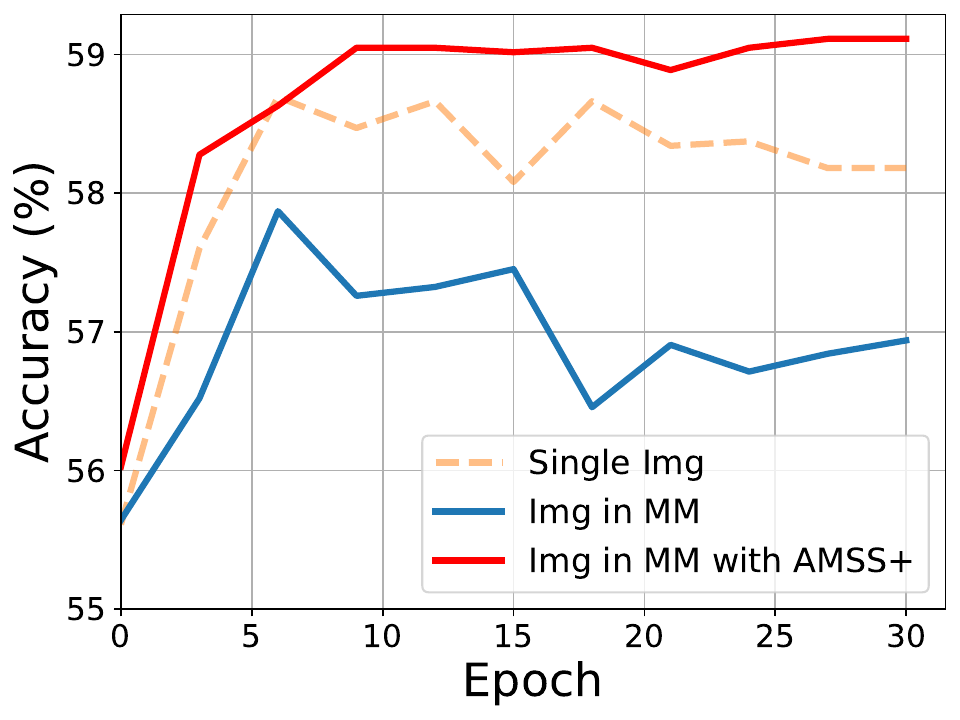}\hfill
  }
  \caption{The same as Figure~\ref{fig:imbalance}, but for CREMA-D and Twitter-15 datasets.}
  \label{fig:imbalance_supp}
  
\end{figure*}

Figure~\ref{fig:imbalance_supp} illustrates the analysis of modality imbalance in two additional datasets (CREMA-D, Twitter-15). Similar to the results depicted in Figure~\ref{fig:imbalance}, the modality imbalance degree on both datasets exhibits a reduction through the AMSS+ strategy.
On the CREMA-D dataset, our method initially performs less favorably than the Baseline method but exhibits significant improvement in later stages compared to the Baseline. Analysis of the curve of imbalance degree variations indicates that early application of the AMSS+ modulation strategy mitigates the dominance of the Audio modality optimization, enabling the model to better explore information from Video modality and ultimately achieve superior performance. 
Furthermore, on the Twitter15 dataset, the balancing strategy implemented by AMSS+ results in improved performance across both modality branches and overall model performance. This improvement is characterized by achieving performance levels for individual modality branches in multimodal models that are comparable to those achieved through separate training of single modality models. Additional experiments once again validate the effectiveness and reliability of AMSS+.

\end{document}